\newtheorem{thm}{Theorem}
\newtheorem{cor}{Corollary}
\newtheorem{lem}{Lemma}
\newtheorem{prop}{Proposition}
\theoremstyle{definition}
\theoremstyle{remark}
\newtheorem{rem}{Remark}
\theoremstyle{example}
\title{Disjunctive Logic Programs versus Normal Logic Programs}
\author{Heng Zhang \and Yan Zhang \\
Artificial Intelligence Research Group\\
School of Computing, Engineering and Mathematics\\ University of Western Sydney, Australia}
\begin{document}

\maketitle

\begin{abstract}
This paper focuses on the expressive power of disjunctive and normal logic programs under the stable model semantics over finite, infinite, or arbitrary structures. A translation from disjunctive logic programs into normal logic programs is proposed and then proved to be sound over infinite structures. The equivalence of expressive power of two kinds of logic programs over arbitrary structures is shown to coincide with that over finite structures, and coincide with whether or not NP is closed under complement. Over finite structures, the intranslatability from disjunctive logic programs to normal logic programs is also proved if arities of auxiliary predicates and functions are bounded in a certain way.
\end{abstract}

\section{Introduction}

Normal logic programs provide us an elegant and efficient language for knowledge representation, which incorporates the abilities of classical logic, mathematical induction and nonmonotonic reasoning. Disjunctive logic programs extend this language by introducing epistemic disjunction to the rule head, motivated to represent more knowledge, in particular, indefinite knowledge. The most popular semantics for them is the stable model semantics, which was originally proposed by~\cite{GL:88}. Logic programming based on it is known as answer set programming, a flourishing paradigm of declarative programming emerged recently.

Identifying the expressive power of languages is one of the central topics in the area of knowledge representation and reasoning. In this paper we try to compare the expressive power of these two kinds of logic programs under the stable model semantics. The expressive power of logic programs as a database query language has been thoroughly studied in last three decades. For a survey, please refer to~\cite{DEGV:01}. However, except for a few work, the results for normal and disjunctive logic programs are limited to Herbrand structures. As encoding knowledge in Herbrand domains is unnatural and inflexible in many cases, our work will focus on infinite structures, finite structures and arbitrary structures. The semantics employed here is the general stable model semantics, which was developed by~\cite{FLL:11,LZ:11} via a second-order translation, and provides us a unified framework for answer set programming.

%
%
%





%

Our contributions in this paper are as follows. Firstly, we show that, over infinite structures, every disjunctive logic program can be equivalently translated to a normal logic program. Secondly, we prove that disjunctive and normal logic programs are of the same expressive power over arbitrary structures if and only if they are of the same expressive power over finite structures, and if and only if complexity class NP is closed under complement. Thirdly, we show that for each integer $k>1$ there is a disjunctive logic program with intensional predicates of arities $\leq k$ that can not be equivalently translated to any normal program with auxiliary predicates and functions of arities $<2k$. To prove them, the relationship between logic programs and classical logic is also studied.

\section{Preliminaries}

{\em Vocabularies} are assumed to be sets of predicate constants
and function constants. Every constant is
equipped with a natural number, its {\em arity}. Nullary function constants are also called {\em individual constants}, and nullary predicate constants are called {\em proposition constants}. For some technical reasons, a vocabulary is allowed to contain an arbitrary infinite set of proposition constants.
Logical symbols are defined as usual, including a countable set of
predicate variables, a countable set of function variables and a countable set of individual variables.
Predicate (function) constants and variables are simply called {\em predicates} ({\em functions})
if no confusion occurs.
Terms, formulae and sentences of a vocabulary $\upsilon$ (or shortly,
$\upsilon$-terms, $\upsilon$-formulae and $\upsilon$-sentences) are built
from $\upsilon$, equality, variables, connectives and quantifiers in a standard way.
For each formula $\varphi$ and each set $\Sigma$ of formulae, let $\upsilon(\varphi)$ and $\upsilon(\Sigma)$ be the sets of all constants occurring in $\varphi$ and $\Sigma$ respectively.
Let $Q\tau$ and $Q\bar{x}$ denote quantifier blocks $QX_1\cdots QX_n$ and $Qx_1\cdots Qx_m$ respectively if $\tau$ is the set of $X_i$ for all integers $1\leq i\leq n$, $\bar{x}=x_1\cdots x_m$, $Q$ is $\forall$ or $\exists$, $X_j$ and $x_i$ are predicate/function and individual variables respectively.


Every {\em structure} $\mathds{A}$ of a vocabulary $\upsilon$ (or
shortly, {\em $\upsilon$-structure} $\mathds{A}$) is accompanied by
a nonempty set $A$, the {\em domain} of $\mathds{A}$, and
interprets each $n$-ary predicate constant $P$ in $\upsilon$ as an
$n$-ary relation $P^{\mathds{A}}$ on $A$, and interprets each
$n$-ary function constant $f$ in $\upsilon$ as an $n$-ary function
$f^{\mathds{A}}$ on $A$. A structure is {\em finite} if its domain is finite; otherwise it is {\em infinite}.
A {\em restriction} of
a structure $\mathds{A}$ to a vocabulary $\sigma$ is the structure obtained
from $\mathds{A}$ by discarding all interpretations for constants
which do not belong to $\sigma$. Furthermore, given a vocabulary $\upsilon$, a structure
$\mathds{A}$ is called an {\em $\upsilon$-expansion} of some
$\sigma$-structure $\mathds{B}$ if $\sigma\subseteq\upsilon$, the vocabulary of $\mathds{A}$ is $\upsilon$, and
$\mathds{B}$ is a restriction of $\mathds{A}$ to $\sigma$.

Every {\em assignment} in a structure
$\mathds{A}$ is a function that maps each individual
variable to an element of $A$ and that maps each predicate
variable to a relation on $A$ of the same arity. Given a (second-order) formula
$\varphi$ and an assignment $\alpha$ in $\mathds{A}$, we write
$(\mathds{A},\alpha)\models\varphi$ if $\alpha$ {\em satisfies}
$\varphi$ in $\mathds{A}$ in the standard way. In particular, if
$\varphi$ is a sentence, we simply write $\mathds{A}\models\varphi$, and say
$\mathds{A}$ is a {\em model} of $\varphi$, or in other
words, $\mathds{A}$ {\em satisfies} $\varphi$. Given two (second-order) formulae $\varphi,\psi$ and a class $\mathcal{C}$ of structures, we say $\varphi$ is {\em equivalent} to $\psi$ over $\mathcal{C}$, or write $\varphi\equiv_{\mathcal{C}}\psi$ for short, if for every structure $\mathds{A}$ in $\mathcal{C}$ and every assignment $\alpha$ in $\mathds{A}$, $\alpha$ satisfies $\varphi$ in $\mathds{A}$ if and only if $\alpha$ satisfies $\psi$ in $\mathds{A}$.

Suppose $\tau$ is a set of predicates and $A$ is a domain, i.e. a nonempty set. Let $\textsc{ga}(\tau,A)$ denote the set of $P(\bar{a})$ for all predicates $P\in\tau$ and all $n$-tuples $\bar{a}$ on $A$ where $n$ is the arity of $P$. Let $\textsc{gpc}(\tau,A)$ be the set of finite disjunctions built from atoms in $\textsc{ga}(\tau,A)$. Each element in $\textsc{ga}(\tau,A)$ ($\textsc{gpc}(\tau,A)$) is called a {\em grounded atom} ({\em grounded positive clause}) of $\tau$ over $A$. Given a structure $\mathds{A}$, let $\textsc{Ins}(\mathds{A},\tau)$ be the set of grounded atoms $P(\bar{a})$ such that $P\in\tau$ and $P(\bar{a})$ is true in $\mathds{A}$.



Let $\mathsf{FIN}$ denote the class of all finite structures, and let $\mathsf{INF}$ denote the class of all infinite structures. Suppose $\Sigma$ and $\Pi$ are two sets of second-order formulae and let $\mathcal{C}$ be a class of structures. We write $\Sigma\leq_{\mathcal{C}}\Pi$ if for each formula $\varphi$ in $\Sigma$, there is a formula $\psi$ in $\Pi$ such that $\varphi\equiv_{\mathcal{C}}\psi$. We write $\Sigma\simeq_{\mathcal{C}}\Pi$ if both $\Sigma\leq_{\mathcal{C}}\Pi$ and $\Pi\leq_{\mathcal{C}}\Sigma$ hold. In particular, if $\mathcal{C}$ is the class of all arbitrary structures, the subscript $\mathcal{C}$ may be dropped.

\subsection{Logic Programs}

Every {\em disjunctive logic program} is a set of {\em rules} of the form\vspace{-.1cm}
\begin{equation*}\label{eqn:rule}
\zeta_1\wedge\cdots\wedge\zeta_{m}\rightarrow\zeta_{m+1}\vee\cdots\vee\zeta_{n}\vspace{-.1cm}
\end{equation*}
where $1\leq m\leq n$, and for each integer $m<i\leq n$, $\zeta_i$ is an atom without equality; for each integer $1\leq j\leq m$, $\zeta_j$ is a {\em literal}, i.e., an atom or its negation. The disjunctive part of the rule is called its {\em head}, and the conjunctive part called its {\em body}. Let $\Pi$ be a disjunctive logic program. Then each {\em intensional predicate} of $\Pi$ is a predicate constant that occurs in the head of some rule in $\Pi$. Atoms built from  intensional predicates of $\Pi$ are called {\em intensional atoms} of $\Pi$.

Let $\Pi$ be a disjunctive logic program. Then $\Pi$ is {\em normal} if the head of each rule contains at most one atom, $\Pi$ is {\em plain} if there is no negation of any intensional atom of $\Pi$ occurring in any of its rule,
$\Pi$ is {\em propositional} if no predicate of positive arity occurs in any of its rules, and $\Pi$ is {\em finite} if it contains only a finite set of rules. In particular, if we do not mention, a logic program is always assumed to be finite.

Given a disjunctive logic program $\Pi$, let $\mathrm{SM}(\Pi)$ denote the formula
$\varphi\wedge\forall\tau^{\ast}(\tau^{\ast}<\tau\rightarrow\neg\varphi^{\ast})$,
where $\tau$ is the set of all intensional predicate constants of $\Pi$; $\tau^{\ast}$ is the set of predicate variables $P^{\ast}$ for all $P$ in $\tau$; $\tau^{\ast}<\tau$ is the formula
$\wedge_{P\in\tau}\forall\bar{x}(P^{\ast}(\bar{x})\rightarrow P(\bar{x}))\wedge\neg\wedge_{P\in\tau}\forall\bar{x}(P(\bar{x})\rightarrow P^{\ast}(\bar{x}))$;
$\varphi$ is the conjunction of all sentences $\forall(\gamma)$ such that $\gamma$ is a rule in $\Pi$ and $\forall(\gamma)$ is the universal closure of $\gamma$; $\varphi^{\ast}$ is the conjunction of $\forall(\gamma^{\ast})$ such that $\gamma$ is a rule in $\Pi$ and $\gamma^{\ast}$ is the rule obtained from $\gamma$ by substituting $P^{\ast}(\bar{t})$ for all positive occurrences of $P(\bar{t})$ in its head or in its body if $P$ is in $\tau$. A structure $\mathds{A}$ is a {\em stable model} of $\Pi$ if it is a model of $\mathrm{SM}(\Pi)$.

Now, given a class $\mathcal{C}$ of structures, or in other words, a {\em property}, we can define it by a logic program in the following way: the models of second-order formula $\exists\tau\mathrm{SM}(\Pi)$ are exactly the structures in $\mathcal{C}$, where $\tau$ is a set of predicate and function constants occurring in $\Pi$. Constants in $\tau$ are called {\em auxiliary constants}. Given $n\geq 0$, let $\mathrm{DLP}^n$ ($\mathrm{DLP}^n_{\mathrm{F}}$) be the set of formulae $\exists\tau\mathrm{SM}(\Pi)$ for all disjunctive logic programs $\Pi$ and all finite sets $\tau$ of predicate (predicate and function, respectively) constants of arities $\leq n$. Let $\mathrm{DLP}$ ($\mathrm{DLP}_{\mathrm{F}}$) be the union of $\mathrm{DLP}^n$ ($\mathrm{DLP}^n_{\mathrm{F}}$, respectively) for all $n\geq 0$. In above definitions, if $\Pi$ is restricted to be normal, we then obtain the notations $\mathrm{NLP}^n,\mathrm{NLP}^n_{\mathrm{F}},\mathrm{NLP}$ and $\mathrm{NLP}_{\mathrm{F}}$ respectively.


Given a rule $\gamma$, a structure $\mathds{A}$ and an assignment $\alpha$ in $\mathds{A}$, let $\gamma[\alpha]$ be the rule obtained from $\gamma$ by substituting $P(\bar{a})$ for all atoms $P(\bar{t})$ where $\bar{a}=\alpha(\bar{t})$, let $\gamma^-_{\textsc{B}}$ be the set of all conjuncts in the body of $\gamma$ in which no intensional predicate positively occurs, and let $\gamma^+$ be the rule obtained from $\gamma$ by removing all literals in $\gamma^-_{\textsc{B}}$. Given a disjunctive logic program $\Pi$, let $\Pi^{\mathds{A}}$ be the set of rules $\gamma^{+}[\alpha]$ for all assignments $\alpha$ in $\mathds{A}$ and all rules $\gamma$ in $\Pi$ such that $\alpha$ satisfies $\gamma^-_{\textsc{b}}$ in $\mathds{A}$. The following proposition shows that the general stable model semantics can be redefined by the above first order GL-reduction:

\begin{prop}[\cite{ZZ:13}, Proposition 4]\label{prop:fo2prop}
Let $\Pi$ be a disjunctive logic program with a set $\tau$ of intensional predicates. Then an $\upsilon(\Pi)$-structure $\mathds{A}$ is a stable model of $\Pi$ iff $\textsc{Ins}(\mathds{A},\tau)$ is a minimal (via set inclusion) model of $\Pi^{\mathds{A}}$.
\end{prop}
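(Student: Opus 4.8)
The plan is to translate the second-order condition $\mathrm{SM}(\Pi)$ into a statement about candidate models of the ground reduct $\Pi^{\mathds{A}}$, by reading each assignment to the predicate variables in $\tau^\ast$ as a subset of $\textsc{ga}(\tau,A)$. Write $I=\textsc{Ins}(\mathds{A},\tau)$, and for $I^\ast\subseteq\textsc{ga}(\tau,A)$ let $(\mathds{A},I^\ast)$ be the expansion of $\mathds{A}$ that interprets each variable $P^\ast$ by $\{\bar{a}:P(\bar{a})\in I^\ast\}$. I would first record that the relation $\tau^\ast<\tau$ holds at $(\mathds{A},I^\ast)$ exactly when $I^\ast\subsetneq I$: its first conjunct forces $I^\ast\subseteq I$ pointwise, and its second conjunct forbids $I\subseteq I^\ast$. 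Hence the quantifier $\forall\tau^\ast(\tau^\ast<\tau\rightarrow\neg\varphi^\ast)$ ranges precisely over the proper subsets of $I$.

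The technical heart is the following claim, which I would prove first: for every $I^\ast\subseteq\textsc{ga}(\tau,A)$, $(\mathds{A},I^\ast)\models\varphi^\ast$ if and only if $I^\ast$ is a model of $\Pi^{\mathds{A}}$. To establish it, fix a rule $\gamma$ and an assignment $\alpha$ of the individual variables, and split the body of $\gamma^\ast$ into the conjuncts coming from $\gamma^-_{\textsc{B}}$ and the surviving positive intensional body atoms together with the head. Since starring rewrites only positive occurrences of intensional atoms, every literal of $\gamma^-_{\textsc{B}}$ --- the non-intensional literals (including equalities) and the negated intensional atoms --- is left untouched and therefore evaluated in $(\mathds{A},I^\ast)$ exactly as in $\mathds{A}$, independently of $I^\ast$; whereas the surviving positive intensional body atoms and all head atoms are rewritten to their starred versions and so are read against $I^\ast$. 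Consequently $(\mathds{A},I^\ast,\alpha)\models\gamma^\ast$ holds either vacuously, when $\alpha$ fails $\gamma^-_{\textsc{B}}$ in $\mathds{A}$, or else exactly when $I^\ast$ satisfies the ground rule $\gamma^+[\alpha]$. Quantifying over $\alpha$ and conjoining over $\gamma$ then gives that $(\mathds{A},I^\ast)\models\varphi^\ast$ iff $I^\ast$ satisfies every $\gamma^+[\alpha]$ with $\alpha$ satisfying $\gamma^-_{\textsc{B}}$ in $\mathds{A}$, which is the assertion that $I^\ast$ is a model of $\Pi^{\mathds{A}}$.

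With the claim in hand the remainder is bookkeeping. Instantiating $I^\ast=I$ and noting that $\varphi^\ast$ evaluated at $(\mathds{A},I)$ coincides verbatim with $\varphi$ evaluated at $\mathds{A}$, I obtain $\mathds{A}\models\varphi$ iff $I$ is a model of $\Pi^{\mathds{A}}$. Instantiating $I^\ast\subsetneq I$ and using the computation of $\tau^\ast<\tau$ above, I obtain $\mathds{A}\models\forall\tau^\ast(\tau^\ast<\tau\rightarrow\neg\varphi^\ast)$ iff no proper subset of $I$ is a model of $\Pi^{\mathds{A}}$. Conjoining the two yields that $\mathds{A}\models\mathrm{SM}(\Pi)$ iff $I$ is a model of $\Pi^{\mathds{A}}$ with no proper submodel, i.e.\ $I=\textsc{Ins}(\mathds{A},\tau)$ is an inclusion-minimal model of $\Pi^{\mathds{A}}$, as required.

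I expect the main obstacle to be the occurrence-by-occurrence verification inside the claim that starring and the deletion of $\gamma^-_{\textsc{B}}$ are genuinely complementary: one must check that negated intensional atoms and all non-intensional literals really do fall into $\gamma^-_{\textsc{B}}$ and are thereby frozen to $\mathds{A}$, that the head atoms (being positive intensional occurrences) are uniformly starred, and that grounding via $\alpha$ --- which evaluates compound terms through the function interpretations of $\mathds{A}$ --- commutes with the quantifier semantics of $\forall(\gamma^\ast)$. Once this correspondence is pinned down at the level of single literal occurrences, the equivalence follows mechanically.
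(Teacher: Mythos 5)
Your proposal is correct, and since the paper states this proposition without proof (it is imported from \cite{ZZ:13}), your direct unfolding of $\mathrm{SM}(\Pi)$ is exactly the argument one would give: identifying assignments to $\tau^{\ast}$ with subsets $I^{\ast}\subseteq\textsc{ga}(\tau,A)$, checking that $\tau^{\ast}<\tau$ captures $I^{\ast}\subsetneq\textsc{Ins}(\mathds{A},\tau)$, and proving the key claim that $(\mathds{A},I^{\ast})\models\varphi^{\ast}$ iff $I^{\ast}$ is a model of $\Pi^{\mathds{A}}$, using that starring and the deletion of $\gamma^-_{\textsc{B}}$ partition the literal occurrences complementarily. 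The only detail worth adding is the degenerate case of constraint rules with empty heads, which your literal-by-literal correspondence handles verbatim (an empty disjunction is never satisfied on either side).
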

%

\subsection{Progression Semantics}

In this paper, every clause and the clauses obtained from it by laws of commutation, association and identity for $\vee$ are regarded to be the same. Now we review a progression semantics proposed by~\cite{ZZ:13}, which generalizes the fixed point semantics of~\cite{LMR:92} to arbitrary structures and to logic programs with default negation.

%

Suppose $\Pi$ is a propositional, possibly infinite and plain disjunctive logic program, and $\Sigma$ is a set of finite disjunctions of atoms in $\upsilon(\Pi)$. We define $\Gamma_{\Pi}(\Sigma)$ as the set of all positive clauses $H\vee C_1\vee\cdots\vee C_k$ such that $k\geq 0$ and there are a rule $p_1\wedge\cdots\wedge p_k\rightarrow H$ in $\Pi$ and a sequence of positive clauses $C_1\vee p_1,\dots,C_k\vee p_k$ in $\Sigma$. It is easy to verify that $\Gamma_{\Pi}$ is a monotonic function on the sets of positive clauses of $\upsilon(\Pi)$.

Now, by the first-order GL-reduction defined above, a progressional operator for first-order logic programs is then defined. Let $\Pi$ be a disjunctive logic program and let $\mathds{A}$ be a structure of $\upsilon(\Pi)$. We define  $\Gamma^{\mathds{A}}_{\Pi}$ as the operator $\Gamma_{\Pi^{\mathds{A}}}$. Furthermore, define $\Gamma^{\mathds{A}}_{\Pi}\uparrow 0$ as the empty set, and define $\Gamma^{\mathds{A}}_{\Pi}\uparrow n$ as the union of $\Gamma^{\mathds{A}}_{\Pi}\uparrow n-1$ and $\Gamma^{\mathds{A}}_{\Pi}(\Gamma^{\mathds{A}}_{\Pi}\uparrow n-1)$ for all integers $n>0$. Finally, let $\Gamma^{\mathds{A}}_{\Pi}\uparrow\omega$ be the union of $\Gamma^{\mathds{A}}_{\Pi}\uparrow n$ for all integers $n\geq 0$. The following proposition provides us a progression semantics for disjuntive logic programs:

\begin{prop}[\cite{ZZ:13}, Theorem 1]\label{prop:fxp2sm}
Let $\Pi$ be a disjunctive logic program, $\tau$ the set of all intensional predicates of $\Pi$, and $\mathds{A}$ a structure of $\upsilon(\Pi)$. Then
$\mathds{A}$ is a stable model of $\Pi$ iff $\textsc{Ins}(\mathds{A},\tau)$ is a minimal model of $\Gamma^{\mathds{A}}_{\Pi}\uparrow\omega$.
\end{prop}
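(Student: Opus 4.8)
The plan is to reduce the statement to a purely propositional claim and then invoke Proposition~\ref{prop:fo2prop}. Writing $P=\Pi^{\mathds{A}}$ for the first-order GL-reduct and $T=\Gamma^{\mathds{A}}_{\Pi}\uparrow\omega=\Gamma_{P}\uparrow\omega$, both the target characterization and Proposition~\ref{prop:fo2prop} concern the same object $I:=\textsc{Ins}(\mathds{A},\tau)$, so it suffices to prove that $P$ and $T$ have exactly the same minimal models; the proposition then follows by chaining this with Proposition~\ref{prop:fo2prop}. Note that $P$ is a propositional, possibly infinite, plain disjunctive program: after the reduction every rule has the shape $p_1\wedge\cdots\wedge p_k\rightarrow H$ with each $p_i$ a grounded intensional atom and $H$ a grounded positive clause, which is exactly the format on which $\Gamma_{P}$ operates. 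Viewing a clause as a finite set of atoms, both $P$ (read as clauses $\neg p_1\vee\cdots\vee\neg p_k\vee H$) and $T$ (a set of grounded positive clauses) consist of finite clauses, hence are closed under intersections of descending chains of models; by Zorn's lemma every model therefore extends downward to a minimal one, a fact I shall use freely.

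First I would establish soundness: every model of $P$ is a model of $T$. This goes by induction on $n$, showing that each clause produced by $\Gamma_{P}(\Sigma)$ is a logical consequence of $P\cup\Sigma$. Indeed, if $I\models P$ and $I$ satisfies every clause of $\Sigma$, then for a clause $H\vee C_1\vee\cdots\vee C_k$ arising from a rule $p_1\wedge\cdots\wedge p_k\rightarrow H$ and clauses $C_i\vee p_i\in\Sigma$, either some $C_i$ already meets $I$, or all $p_i\in I$ and the rule forces $I$ to meet $H$; either way the produced clause holds in $I$. Since $\Gamma^{\mathds{A}}_{\Pi}\uparrow 0=\emptyset$, induction yields $I\models\Gamma^{\mathds{A}}_{\Pi}\uparrow n$ for all $n$, hence $I\models T$. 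I would also record that $\Gamma_{P}$ is continuous, since each output clause uses only finitely many input clauses, so that $T$ is a fixpoint of $\Gamma_{P}$, i.e. $\Gamma_{P}(T)\subseteq T$.

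The heart of the argument is the converse: every minimal model of $T$ is a model of $P$. Let $I$ be a minimal model of $T$. Minimality supplies, for each atom $a\in I$, a \emph{critical clause} $C_a\in T$ with $C_a\cap I=\{a\}$, because $I\setminus\{a\}$ fails some clause of $T$ while $I$ satisfies it. Now take any rule $p_1\wedge\cdots\wedge p_k\rightarrow H$ of $P$ whose body lies in $I$. Writing each critical clause as $C_{p_i}=p_i\vee D_i$ with $D_i\cap I=\emptyset$, an application of $\Gamma_{P}$ to this rule and the clauses $C_{p_1},\dots,C_{p_k}$ produces $H\vee D_1\vee\cdots\vee D_k\in\Gamma_{P}(T)\subseteq T$ (for $k=0$ the clause $H$ itself already lies in $T$). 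Since $I\models T$ but meets none of $D_1,\dots,D_k$, it must meet $H$; thus the rule is satisfied, and $I\models P$.

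Finally I would assemble the two directions. If $I$ is a minimal model of $T$, then $I\models P$ by the previous paragraph, and any $J\subsetneq I$ with $J\models P$ would, by soundness, satisfy $T$ and contradict the minimality of $I$ in $T$; hence $I$ is a minimal model of $P$. Conversely, if $I$ is a minimal model of $P$ then $I\models T$ by soundness, and were $I$ non-minimal in $T$ there would be $J\subsetneq I$ with $J\models T$, below which sits a minimal model $J'$ of $T$; but then $J'\models P$ and $J'\subsetneq I$, contradicting the minimality of $I$ in $P$. So the minimal models of $P$ and $T$ coincide, and the proposition follows. I expect the main obstacle to be this converse step together with the bookkeeping forced by an infinite $\Pi^{\mathds{A}}$: one must confirm that $\Gamma_{P}$ is continuous so that $T$ is genuinely closed, guaranteeing that the resolved clause $H\vee D_1\vee\cdots\vee D_k$ indeed belongs to $T$, and that minimal models exist below every model despite the absence of a least model for disjunctive programs.
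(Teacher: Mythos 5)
The paper never proves this proposition: it is imported verbatim from \cite{ZZ:13} (Theorem 1 there), just as Proposition~\ref{prop:fo2prop} is imported, so there is no in-paper proof to compare against and your argument must stand on its own. It essentially does. Reducing the statement via Proposition~\ref{prop:fo2prop} to the claim that $P=\Pi^{\mathds{A}}$ and $T=\Gamma^{\mathds{A}}_{\Pi}\uparrow\omega$ have the same minimal models is a legitimate and economical route, and the four ingredients you supply are all needed and all correct: the soundness induction (every model of $P$ satisfies every stage $\Gamma^{\mathds{A}}_{\Pi}\uparrow n$), the closure $\Gamma_{P}(T)\subseteq T$ (which does require exactly what you cite: finite rule bodies plus the fact that the stages form an increasing chain), the critical-clause argument showing a minimal model of $T$ satisfies $P$, and the Zorn's-lemma step producing a minimal model of $T$ below any model of $T$ --- this last point is genuinely necessary, since over an infinite domain $T$ is an infinite clause set and disjunctive programs have no least model, and your verification that models are closed under intersections of descending chains (because each clause is finite) is the right justification. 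Empty heads (constraints) and empty-body reduct rules are covered by your $k=0$ and empty-$H$ readings.

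One step deserves to be made explicit, because it is the only place where your proof could break: writing each critical clause as $C_{p_i}=p_i\vee D_i$ with $D_i\cap I=\emptyset$ treats clauses as \emph{sets} of atoms, i.e., it silently uses idempotence of $\vee$. The paper identifies clauses only up to commutation, association and identity; under a literal multiset reading your decomposition fails whenever $p_i$ occurs twice in $C_{p_i}$, and in fact the proposition itself then fails. Concretely, take $P=\{\,\rightarrow a\vee a,\ \rightarrow b\vee b,\ a\wedge b\rightarrow c\,\}$, which arises as the reduct of rules such as $z=z\rightarrow P(x)\vee P(y)$, $z=z\rightarrow Q(x)\vee Q(y)$, $P(x)\wedge Q(x)\rightarrow R(x)$ over a one-element domain. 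An easy induction shows every clause of $\Gamma_{P}\uparrow\omega$ contains at least two occurrences of atoms from $\{a,b\}$, so $\{a,b\}$ is a minimal model of $\Gamma_{P}\uparrow\omega$, while the unique minimal model of $P$ is $\{a,b,c\}$: the two notions would disagree, contradicting Proposition~\ref{prop:fo2prop}. So the set reading of clauses is forced for Proposition~\ref{prop:fxp2sm} to be true at all, and your proof is correct under that reading; the write-up should state this convention in one sentence rather than rely on it tacitly.
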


\begin{rem}
In above proposition, it is easy to see that, if $\Pi$ is normal, $\mathds{A}$ is a stable model of $\Pi$ iff $\textsc{Ins}(\mathds{A},\tau)=\Gamma^{\mathds{A}}_{\Pi}\uparrow\omega$.
\end{rem}

\section{Infinite Structures}

In this section, we propose a translation that turns each disjunctive logic program to an equivalent normal logic program over infinite structures. The main idea is to encode grounded positive clauses by elements in the intended domain. With the encoding, we then simulate the progression of given disjunctive logic program by the progression of a normal program.

Firstly, we show how to encode each clause by an element.
Let $A$ be an infinite set. Every {\em encoding function} on $A$ is an injective function from $A\times A$ into $A$. Let $\textsl{enc}$ be an encoding function on $A$ and $c$ an element in $A$ such that $\textsl{enc}(a,b)\neq c$ for all elements $a,b\in A$. For the sake of convenience, we let $\textsl{enc}(a_1,\dots,a_k;c)$ denote the following expression $$\textsl{enc}((\cdots(\textsl{enc}(c,a_1),a_2),\cdots),a_k)$$ for any integer $k\geq 0$ and any set of elements $a_1,\dots,a_k\in A$. Let $\textsl{enc}(A,c)$ denote the set $\{\textsl{enc}(\bar{a};c):\bar{a}\in A^{\ast}\}$ where $A^{\ast}$ is the set of all finite tuples of elements in $A$. The {\em merging function} $\textsl{mrg}$ on $A$ related to $\textsl{enc}$ and $c$ is the function from $\textsl{enc}(A,c)\times\textsl{enc}(A,c)$ into $\textsl{enc}(A,c)$ that satisfies
$$
\textsl{mrg}(\textsl{enc}(\bar{a};c),\textsl{enc}(\bar{b};c))=\textsl{enc}(\bar{a},\bar{b};c)
$$
for all tuples $\bar{a},\bar{b}\in A^{\ast}$. The {\em extracting function} $\textsl{ext}$ on $A$ related to $\textsl{enc}$ and $c$ is the function from $\textsl{enc}(A,c)\times A$ into $\textsl{enc}(A,c)$ that satisfies $\textsl{ext}(\textsl{enc}(\bar{a};c),b)=\textsl{enc}(\bar{a}';c)$, where $\bar{a}'$ is the tuple obtained from $\bar{a}$ by removing all occurrences of $b$. It is clear that both the merging function and the extracting function are unique if $\textsl{enc}$ and $c$ are fixed.

As mentioned before, the order of disjuncts in a clause does not change the semantics. To omit the order, we need some encoding predicates related to $\textsl{enc}$ and $c$. The predicate   $\textsl{in}$ is a subset of $\textsl{enc}(A,c)\times A$ such that $(\textsl{enc}(\bar{a},c),b)\in\textsl{in}$ iff $b$ occurs in $\bar{a}$; the predicate $\textsl{subc}$ is a binary relation on $\textsl{enc}(A,c)$ such that $(\textsl{enc}(\bar{a},c),\textsl{enc}(\bar{b},c))\in\textsl{subc}$ iff all the elements in $\bar{a}$ occur in $\bar{b}$. The predicate $\textsl{equ}$ is a binary relation on $\textsl{enc}(A,c)$ such that
$(a,b)\in\textsl{equ}$ iff $(a,b)\in\textsl{subc}$ and $(b,c)\in\textsl{subc}$.

\smallskip
\noindent{\bf Example 1.}
Let $\mathbb{Z}^+$ be the set of all positive integers, and define $e(m,n)=2^{m}+3^{n}$ for all integers $m,n\in\mathbb{Z}^+$. Then $e$ is clearly an encoding function on $\mathbb{Z}^+$, and integers $1,2,3,4$ are not in the range of $e$. Suppose $P_1,P_2,P_3$ are predicates. Then a grounded atom $P_2(1,3,5)$ can be encoded as $e(1,3,5;2)$, i.e. $e(e(e(2,1),3),5)$ that equals to $2^{155}+3^5$; the positive clause $P_2(1,3,5)\vee P_3(2)\vee P_1(2,4)$ can be encoded as $e(e(1,3,5;2),e(2;3),e(2,4;1);4)$, where, for each $1\leq i\leq 3$, integer $i$ is to be used for the ending flag of atoms $P_i(\cdots)$, and integer $4$ is for the ending flag of clauses.\hfill$\Box$\vspace{.1cm}

With this method for encoding, we can then define a translation.
Let $\Pi$ be a disjunctive logic program. We first construct a class of logic programs related to $\Pi$ as follows:

1. Let $C_{\Pi}$ denote the set consisting of an individual constant $c_{P}$ for each predicate constant $P$ that occurs in $\Pi$, and of an individual constant $c_{\epsilon}$. Let $\Pi_1$ consist of the rule
\begin{eqnarray}
\textsc{enc}(x,y,c)&\rightarrow&\bot
\end{eqnarray}
for each individual constant $c\in C_{\Pi}$, and the following rules:
\begin{eqnarray}
\!\!\!\!\!\!\neg\underline{\textsc{enc}}(x,y,z)\!\!\!&\rightarrow&\!\!\!\textsc{enc}(x,y,z)\\
\!\!\!\!\!\!\neg\textsc{enc}(x,y,z)\!\!\!&\rightarrow&\!\!\!\underline{\textsc{enc}}(x,y,z)\\
\!\!\!\!\!\!\textsc{enc}(x,y,z)\wedge\textsc{enc}(u,v,z)\wedge\neg x=u\!\!\!&\rightarrow&\!\!\!\bot\\
\!\!\!\!\!\!\textsc{enc}(x,y,z)\wedge\textsc{enc}(u,v,z)\wedge\neg y=v\!\!\!&\rightarrow&\!\!\!\bot
\end{eqnarray}
\begin{eqnarray}
\!\!\!\!\!\!\textsc{enc}(x,y,z)\!\!\!&\rightarrow&\!\!\!\textsc{ok}_e(x,y)\\
\!\!\!\!\!\!\neg\textsc{ok}_e(x,y)\!\!\!&\rightarrow&\!\!\!\textsc{ok}_e(x,y)\\
\!\!\!\!\!\!\textsc{enc}(x,y,z)\wedge\textsc{enc}(x,y,u)\wedge\neg z=u\!\!\!&\rightarrow&\!\!\!\bot
\end{eqnarray}\vspace{-.5cm}

2. Let $\Pi_2$ be the program consisting of the following rules:\vspace{-.15cm}
\begin{eqnarray}
\!\!\!\!\!\!\!y=c_{\epsilon}\!\!\!\!&\rightarrow&\!\!\!\!\textsc{mrg}(x,y,x)\\
\!\!\!\!\!\!\!\left[\begin{aligned}
\textsc{mrg}(x,u,v)&\wedge&\!\!\!\!\!\textsc{enc}(u,w,y)\\
&\wedge&\!\!\!\!\!\textsc{enc}(v,w,z)
\end{aligned}\right]
\!\!\!\!\!&\rightarrow&\!\!\!\!\textsc{mrg}(x,y,z)\\
\!\!\!\!\!\!\!x=c_{\epsilon}\!\!\!\!&\rightarrow&\!\!\!\!\textsc{ext}(x,y,x)\\
\!\!\!\!\!\!\!\textsc{ext}(u,y,v)\wedge\textsc{enc}(u,w,x)\wedge w=y\!\!\!\!&\rightarrow&\!\!\!\!\textsc{ext}(x,y,v)\\
\!\!\!\!\!\!\!\left[\begin{aligned}
\textsc{ext}(u,y,v)\wedge\textsc{enc}(u,w,x)\\
\wedge \neg w=y\wedge\textsc{enc}(v,w,z)
\end{aligned}\right]
\!\!\!\!\!&\rightarrow&\!\!\!\!\textsc{ext}(x,y,z)\\
\!\!\!\!\!\!\!\textsc{enc}(x,u,y)\!\!\!\!&\rightarrow&\!\!\!\!\textsc{in}(u,y)\\
\!\!\!\!\!\!\!\textsc{enc}(x,v,y)\wedge\textsc{in}(u,x)\!\!\!\!&\rightarrow&\!\!\!\!\textsc{in}(u,y)\\
\!\!\!\!\!\!\!x=c_{\epsilon}\!\!\!\!&\rightarrow&\!\!\!\!\textsc{subc}(x,y)\\
\!\!\!\!\!\!\!\textsc{subc}(u,y)\wedge\textsc{enc}(u,v,x)\wedge\textsc{in}(v,y)\!\!\!\!&\rightarrow&\!\!\!\!\textsc{subc}(x,y)\\
\!\!\!\!\!\!\!\textsc{subc}(x,y)\wedge\textsc{subc}(y,x)\!\!\!\!&\rightarrow&\!\!\!\!\textsc{equ}(x,y)
\end{eqnarray}\vspace{-.5cm}

\noindent

3. Let $\Pi_3$ be the logic program consisting of the rule\vspace{-.15cm}
\begin{eqnarray}
\label{eqn:simequ}
\textsc{true}(u)\wedge\textsc{equ}(u,v)&\rightarrow&\textsc{true}(v)
\end{eqnarray}\vspace{-.55cm}

\noindent
and the rule\vspace{-.15cm}
\begin{eqnarray}
\label{eqn:progsim}
\!\!\!\!\!\!\!\!\!\!\!\left[\begin{aligned}
\textsc{true}(x_1)\wedge z_1=\lceil\vartheta_1\rceil\wedge\textsc{in}(z_1,x_1)\wedge\cdots\\
\wedge\textsc{true}(x_k)\wedge z_k=\lceil\vartheta_k\rceil\wedge\textsc{in}(z_k,x_k)\\
\wedge\textsc{ext}(x_1,z_1,y_1)\wedge\cdots\wedge\textsc{ext}(x_k,z_k,y_k)\\
\wedge\textsc{mrg}(y_1,\dots,y_k,\lceil\gamma_{\textsc{h}}\rceil,v)\wedge\gamma^-_{\textsc{b}}\end{aligned}\right]
\!\!\!\!\!\!\!&\rightarrow&\!\!\!\!\!\textsc{true}(v)
\end{eqnarray}\vspace{-.25cm}

\noindent
for each rule $\gamma$ in $\Pi$, where $\vartheta_1,\dots,\vartheta_k$ list all the atoms that have strictly positive occurrences in the body of $\gamma$ for some integer $k\geq 0$; $\gamma_{\textsc{h}}$ is the head of $\gamma$, $\gamma^-_{\textsc{b}}$ is the conjunction of literals occurring in the body of $\gamma$ but not in $\vartheta_1,\dots,\vartheta_k$; for each integer $1\leq i\leq k$, $z_i=\lceil\vartheta_i\rceil$ denotes the formula\vspace{-.15cm}
\begin{align*}
\textsc{enc}(c_P,t_1,u^i_1)&\wedge\textsc{enc}(u^i_1,t_2,u^i_2)\wedge\cdots\\
&\wedge\textsc{enc}(u^i_{m-1},t_m,u^i_m)\wedge z_i=u^i_m
\end{align*}\vspace{-.55cm}

\noindent
if $\vartheta_i=P(t_1,\dots,t_m)$; $\textsc{mrg}(y_1,\dots,y_k,\lceil\gamma_{\textsc{h}}\rceil,v)$ denotes\vspace{-.15cm}
\begin{align*}
\textsc{enc}(c_{\epsilon},&\lceil\zeta_1\rceil,v_1)\wedge\textsc{enc}(v_1,\lceil\zeta_2\rceil,v_2)\wedge\cdots\\
&\wedge\textsc{enc}(v_{n-1},\lceil\zeta_n\rceil,v_n)\wedge\textsc{mrg}(y_1,y_2,w_2)\\
&\wedge\cdots\wedge\textsc{mrg}(w_{k-1},y_k,w_k)\wedge\textsc{enc}(w_k,v_n,v)
\end{align*}\vspace{-.55cm}

\noindent
if $\gamma_{\textsc{h}}=\zeta_1\vee\cdots\vee\zeta_n$ for some atoms $\zeta_1,\dots,\zeta_n$ and $n\geq 0$.

4. Let $\Pi_4$ be the program consisting of the rule\vspace{-.15cm}
\begin{eqnarray}
\label{eqn:pi5_false1}x=c_{\epsilon}&\rightarrow&\textsc{false}(x)
\end{eqnarray}\vspace{-.55cm}

\noindent
and the rule\vspace{-.15cm}
\begin{eqnarray}
\label{eqn:pi5_false2}\!\!\!\!\!\!\!\textsc{false}(x)\wedge\textsc{enc}(x,\lceil\vartheta\rceil,y)\wedge\neg\vartheta\!\!\! &\rightarrow&\!\!\! \textsc{false}(y)
\end{eqnarray}\vspace{-.55cm}

\noindent for every intensional predicate $P$ of $\Pi$ and every atom $\vartheta$ of the form $P(\bar{z}_P)$, where $z_1,z_2,\dots$ are individual variables, $k_P$ is the arity of $P$, and $\bar{z}_P$ denotes the tuple $z_1\cdots z_{k_P}$.

5. Let $\Pi_5$ be the logic program consisting of the rule\vspace{-.15cm}
\begin{eqnarray}
\label{eqn:pi5_true1}\textsc{true}(c_{\epsilon})&\rightarrow&\bot
\end{eqnarray}\vspace{-.55cm}

\noindent and the following rule\vspace{-.15cm}
\begin{eqnarray}
\label{eqn:pi5_true2}\!\!\!\!\!\!\!\textsc{true}(x)\wedge\textsc{ext}(x,\lceil\vartheta\rceil,y)\wedge\textsc{false}(y)\!\!\! &\rightarrow&\!\!\!\vartheta
\end{eqnarray}\vspace{-.55cm}

\noindent for each atom $\vartheta$ of the form same as that in $\Pi_4$.

In the end, define $\Pi^{\diamond}$ as the union of programs $\Pi_1,\dots,\Pi_5$.

Next, we explain the intuition of this translation. Program $\Pi_1$ assures that $\textsc{enc}$ will be interpreted as an encoding function on the domain and constants in $C_{\Pi}$ will be interpreted as ending flags for the encoding of atoms and positive clauses. Program $\Pi_2$ defines the merging and extracting functions, and all the encoding predicates mentioned before. Based on these assumptions, program $\Pi_3$ then simulates the progression of $\Pi$ on encodings of positive clauses. As all the needed clauses will be derived in $\omega$ stages, this simulation is realizable. Lastly, we use program $\Pi_5$ and $\Pi_4$ to decode the encodings of positive clauses. The only difficulty in this decoding is that we need represent the resulting positive clauses by a program without disjunction. As the set of positive clauses are clearly head-cycle-free, we can get over it by applying the shift operation presented in~\cite{BD:94}.

\begin{thm}
Let $\Pi$ be a disjunctive logic program. Then over infinite structures, $\textsc{SM}(\Pi)$ is equivalent to $\exists\varsigma\textsc{SM}(\Pi^{\diamond})$, where $\varsigma$ denotes the set of constants occurring in $\Pi^{\diamond}$ but not in $\Pi$.
\end{thm}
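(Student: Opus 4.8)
The plan is to fix an arbitrary infinite $\upsilon(\Pi)$-structure $\mathds{A}$ and establish the two inclusions $\mathds{A}\models\textsc{SM}(\Pi)\Rightarrow\mathds{A}\models\exists\varsigma\textsc{SM}(\Pi^{\diamond})$ and its converse. Throughout I would argue through the progression semantics rather than through $\textsc{SM}$ directly: by Proposition~\ref{prop:fxp2sm}, $\mathds{A}$ is a stable model of $\Pi$ iff $\textsc{Ins}(\mathds{A},\tau)$ is a minimal model of $\Gamma^{\mathds{A}}_{\Pi}\uparrow\omega$, so the whole argument reduces to showing that $\Pi^{\diamond}$ (i)~sets up a faithful encoding of grounded positive clauses, (ii)~lets $\textsc{true}$ compute exactly the clause-set $\Gamma^{\mathds{A}}_{\Pi}\uparrow\omega$, and (iii)~decodes that clause-set so that the interpretation of $\tau$ carried by $\mathds{A}$ is precisely a minimal model of it. The restriction to $\mathsf{INF}$ enters at the very first step, since an injective $\textsl{enc}\colon A\times A\to A$ whose range omits the finitely many flags in $C_{\Pi}$ exists exactly when $A$ is infinite.

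First I would pin down the auxiliary predicates. Every rule of $\Pi_1$ and $\Pi_2$ except the three even loops $\neg\underline{\textsc{enc}}\to\textsc{enc}$, $\neg\textsc{enc}\to\underline{\textsc{enc}}$ and $\neg\textsc{ok}_e\to\textsc{ok}_e$ is plain, so its effect can be read off a least fixpoint. The even-loop pairs act as the standard guess-and-check device: in any stable model exactly one of $\textsc{enc}$ and $\underline{\textsc{enc}}$ holds of each triple, so $\textsc{enc}$ may be an arbitrary relation, whereas the accompanying constraints force it to be the graph of an injective partial function whose range avoids the flags in $C_{\Pi}$, and the loop on $\textsc{ok}_e$ together with the rule $\textsc{enc}(x,y,z)\to\textsc{ok}_e(x,y)$ forces that function to be total. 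I would then show, by induction on the stages of the defining rules, that $\textsc{mrg},\textsc{ext},\textsc{in},\textsc{subc},\textsc{equ}$ are interpreted exactly as the merging, extracting, membership, subclause and clause-equivalence relations induced by $\textsc{enc}$ and the flags. Since these predicates are defined by plain rules over a now-fixed $\textsc{enc}$, the minimisation inside $\textsc{SM}(\Pi^{\diamond})$ delivers their least fixpoints and nothing more.

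The core step is a lemma stating that, relative to a correct encoding, $\textsc{true}(e)$ holds in the model iff $e$ encodes, up to the equivalence computed by $\textsc{equ}$, a positive clause in $\Gamma^{\mathds{A}}_{\Pi}\uparrow\omega$. Here I would match rule~(\ref{eqn:progsim}) one-to-one with the definition of $\Gamma_{\Pi^{\mathds{A}}}$: extracting each strictly-positive body atom $\vartheta_i$ from a true clause $x_i$ via $\textsc{ext}$ leaves an encoding $y_i$ of the residual clause, the assertion of $\gamma^{-}_{\textsc{b}}$ realises exactly the first-order GL-reduction that selects the instances kept in $\Pi^{\mathds{A}}$, and $\textsc{mrg}$ assembles $y_1,\dots,y_k$ with $\lceil\gamma_{\textsc{h}}\rceil$ into the encoding of the clause produced by one application of $\Gamma_{\Pi}$. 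The forward inclusion is an induction on the progression stage and the backward inclusion an induction on the derivation length of $\textsc{true}$, with rule~(\ref{eqn:simequ}) ensuring closure under clause equivalence in both directions.

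Finally I would analyse the decoding. Given the interpretation of $\tau$ carried by $\mathds{A}$, the plain program $\Pi_4$ makes $\textsc{false}(e)$ true iff $e$ encodes a clause all of whose disjuncts are false in $\mathds{A}$, and rule~(\ref{eqn:pi5_true2}) of $\Pi_5$ is exactly the shifted, normal form of the disjunctive clauses collected in $\textsc{true}$: whenever a derived clause reduces, after deletion of a single disjunct $\vartheta$, to an all-false clause, it fires $\vartheta$. Because the set of derived positive clauses is head-cycle-free, the shift of~\cite{BD:94} preserves stable models, so the atoms of $\tau$ forced true on $\mathds{A}$ are exactly those of a minimal model of $\Gamma^{\mathds{A}}_{\Pi}\uparrow\omega$, while the constraint~(\ref{eqn:pi5_true1}) rules out derivation of the empty clause, i.e.\ unsatisfiability. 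Combining this with Proposition~\ref{prop:fxp2sm} closes both directions: in the forward direction I expand $\mathds{A}$ by the canonical infrastructure together with the clause-set $\Gamma^{\mathds{A}}_{\Pi}\uparrow\omega$ and verify membership in, and minimality for, $\textsc{SM}(\Pi^{\diamond})$; in the converse I read the required facts about $\textsc{enc}$, $\textsc{true}$ and $\tau$ off any stable model of $\Pi^{\diamond}$. I expect the main obstacle to be verifying that the global minimisation in $\textsc{SM}(\Pi^{\diamond})$ restricts faithfully to a minimisation of $\tau$ alone: because $\textsc{false}$ depends negatively on $\tau$ and then feeds rule~(\ref{eqn:pi5_true2}), lowering $\tau$ can enlarge $\textsc{false}$ and refire $\tau$-atoms, so one must show that the guessed-and-fixed $\textsc{enc}$ and the least-fixpoint infrastructure do not perturb the reduct used to minimise $\tau$, making minimality in $\Pi^{\diamond}$ coincide exactly with the minimal-model condition of the progression, and that the head-cycle-free shift remains sound when the clauses arise only as encodings rather than syntactically.
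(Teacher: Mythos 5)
Your route is the paper's route: force the encoding infrastructure via $\Pi_1,\Pi_2$, prove that $\textsc{true}$ computes exactly the encodings of $\Gamma^{\mathds{A}}_{\Pi}\uparrow\omega$ (this is the paper's Claim~1, proved by the same two inductions you describe), and then argue that the decoding through $\Pi_4,\Pi_5$ and the shift operation of \cite{BD:94} converts minimality of $\textsc{Ins}(\mathds{A},\tau)$ for $\Gamma^{\mathds{A}}_{\Pi}\uparrow\omega$ into minimality for the reduct of $\Pi_5$ (the paper's Claim~2), with both directions of the theorem closed by Propositions~\ref{prop:fo2prop} and~\ref{prop:fxp2sm} exactly as you propose.

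There is, however, one genuine gap, and it sits precisely at the point you flag in your last sentences and then leave as an ``obstacle'': nothing in your argument licenses analysing the five modules separately, because $\mathrm{SM}(\Pi^{\diamond})$ imposes a \emph{single} joint minimisation over all intensional predicates of $\Pi^{\diamond}$ --- $\textsc{enc}$, $\underline{\textsc{enc}}$, $\textsc{ok}_e$, the $\Pi_2$-predicates, $\textsc{true}$, $\textsc{false}$ and the predicates of $\tau$ together. Your module-by-module reading (guess-and-check for $\textsc{enc}$, least fixpoints for $\Pi_2$, progression for $\textsc{true}$, minimisation of $\tau$ alone) is only meaningful after that joint minimisation has been decomposed. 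The paper's very first step does this: it invokes the Splitting Lemma of \cite{FLLP:09} to obtain $\mathrm{SM}(\Pi^{\diamond})\equiv\mathrm{SM}(\Pi_1)\wedge\cdots\wedge\mathrm{SM}(\Pi_5)$, and every later statement is made relative to the individual $\mathrm{SM}(\Pi_i)$. Moreover, the specific interaction you worry about --- lowering $\tau$ enlarges $\textsc{false}$ via rule~(\ref{eqn:pi5_false2}) and refires $\tau$-atoms via rule~(\ref{eqn:pi5_true2}) --- is dissolved by checking the hypothesis of that lemma, not by any reduct computation: the dependence of $\textsc{false}$ on $\tau$ goes through the negated literal $\neg\vartheta$, and negative occurrences contribute no edges to the positive dependency graph on which splitting is predicated. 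Hence $\tau$ depends on $\textsc{false}$ but not conversely, no strongly connected component crosses the intended layers, and the decomposition is sound. Without citing this lemma (or reproving such a decomposition by hand), every one of your inductions rests on the unproved assumption that the infrastructure predicates stay \emph{fixed} while $\tau$ alone is minimised, so the proof as written does not go through.
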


\begin{proof}
Let $\upsilon_1,\dots,\upsilon_5$ and $\tau$ be the set of all intensional predicates of $\Pi_1,\dots,\Pi_5$ and $\Pi$ respectively. Let $\sigma$ be the union of $\upsilon_1,\upsilon_2$ and $\upsilon(\Pi)$. Then by Splitting Lemma in~\cite{FLLP:09} and the second-order transformation, we can conclude\vspace{-.1cm}
\begin{equation}\label{eqn:splitting}
\mathrm{SM}(\Pi^{\diamond})\equiv\mathrm{SM}(\Pi_1)\wedge\cdots\wedge\mathrm{SM}(\Pi_5).\vspace{-.1cm}
\end{equation}

Let $\mathds{A}$ be any infinite structure of $\upsilon(\Pi)$, and let $\mathds{B}$ be any $\sigma$-expansion of $\mathds{A}$ that satisfies the following conditions:
\begin{enumerate}
\item $\textsc{enc}$ is interpreted as the graph of an encoding function $\textsl{enc}$ on $A$ such that no element among $c_{\epsilon}^{\mathds{B}}$ and $c_P^{\mathds{B}}$ (for all $P\in\tau$) belongs to the range of $\textsl{enc}$, $\underline{\textsc{enc}}$ is interpreted as the complement of the graph of $\textsl{enc}$, $\textsc{ok}_e^{\mathds{B}}=A\times A$;
\item $\textsc{mrg}$ and $\textsc{ext}$ are interpreted as graphs of the merging and extracting functions related to $\textsl{enc}$ and $c_{\epsilon}^{\mathds{B}}$ respectively, $\textsc{in},\textsc{subc},\textsc{equ}$ are interpreted as encoding predicates $\textsl{in},\textsl{subc},\textsl{equ}$ related to $\textsl{enc}$ and $c_{\epsilon}^{\mathds{B}}$ respectively.
\end{enumerate}

For convenience, we need define some notations. Given a grounded atom $\vartheta$ of the form $P(a_1,\dots,a_k)$ for any $P\in\tau$, let $\langle\vartheta\rangle$ be short for $\textsl{enc}(a_1,\dots,a_k;c_P^{\mathds{B}})$. Given a grounded clause $C$ in $\textsc{gpc}(\tau,A)$ of the form $\vartheta_1\vee\cdots\vee\vartheta_n$ where each $\vartheta_i$ is an atom, let $\langle C\rangle$ denote $\textsl{enc}(\langle\vartheta_1\rangle,\dots,\langle\vartheta_n\rangle;c_{\epsilon}^{\mathds{B}})$. Given a set $\Sigma\subseteq\textsc{gpc}(\tau,A)$, let $\langle\Sigma\rangle$ denote the set $\{\langle C\rangle:C\in\Sigma\}$. Moreover, let $\Delta^{n}(\mathds{B})=\{a\in B:\textsc{true}(a)\in\Gamma^{\mathds{B}}_{\Pi_3}\uparrow n\}$. By an induction on $n$, we can show the following claim:

\smallskip
\noindent{\em Claim} 1. For all integers $n\geq 0$, $\langle\Gamma^{\mathds{A}}_{\Pi}\uparrow n\rangle=\Delta^n(\mathds{B})$.
\smallskip

Now, let $\mathds{B}^+$ be the $\upsilon(\Pi^{\diamond})$-expansion of $\mathds{B}$ that interprets $\textsc{true}$ as the set $\cup_{n\geq 0}\Delta^n(\mathds{B})$, and interprets $\textsc{false}$ as the set of $\langle C\rangle$ for all $C\in\textsc{gpc}(\tau,A)$ such that $\textsc{Ins}(\mathds{A},\tau)\models\neg C$.

%

\smallskip
\noindent{\em Claim} 2. Let $\mathds{C}=\mathds{B}^+$. Then $\textsc{Ins}(\mathds{A},\tau)$ is a minimal model of $\Gamma^{\mathds{A}}_{\Pi}\uparrow\omega$ iff $\textsc{Ins}(\mathds{C},\tau)$ is a minimal model of $\Pi_5^{\mathds{C}}$.
\smallskip

To prove this claim, we need check the validity of decoding, and then show that the shift operation preserves the semantics, which is similar to that of Theorem 4.17 in~\cite{BD:94}. Due to the limit of space, we leave the detailed proof to a full version of this paper.

\smallskip

With these two claims, we can prove the theorem now:

``$\Longrightarrow$": Suppose $\mathds{A}$ is an infinite model of $\mathrm{SM}(\Pi)$. Let $\mathds{B}$ be a $\sigma$-expansion of $\mathds{A}$ defined by conditions 1 and 2. The existence of expansion $\mathds{B}$ is clearly assured by the infiniteness of $A$. It is easy to check that $\mathds{B}$ is a stable model of both $\Pi_1$ and $\Pi_2$. Let $\mathds{C}$ be the structure $\mathds{B}^+$. Then it is also clear that $\mathds{C}$ is a stable model of both $\Pi_3$ and $\Pi_4$. On the other hand, by Proposition \ref{prop:fxp2sm} and the assumption, $\textsc{Ins}(\mathds{A},\tau)$ should be a minimal model of $\Gamma^{\mathds{A}}_{\Pi}\uparrow\omega$. So, by Claim 2, $\textsc{Ins}(\mathds{C},\tau)$ is a minimal model of $\Pi_5^{\mathds{C}}$, which means that $\mathds{C}$ is a stable model of $\Pi_5$ by Proposition \ref{prop:fo2prop}. By equation (\ref{eqn:splitting}), $\mathds{C}$ is then a stable model of $\Pi^{\diamond}$, which means that $\mathds{A}$ satisfies $\exists\varsigma\mathrm{SM}(\Pi^{\diamond})$.

``$\Longleftarrow$": Suppose $\mathds{A}$ is an infinite model of $\exists\varsigma\mathrm{SM}(\Pi^{\diamond})$. Then there is an $\upsilon(\Pi^{\diamond})$-expansion $\mathds{C}$ of $\mathds{A}$ such that $\mathds{C}$ is a stable model of $\Pi^{\diamond}$. Let $\mathds{B}$ be the restrictions of $\mathds{C}$ to $\sigma$. By equation (\ref{eqn:splitting}), $\mathds{B}$ is a model of the formulae $\mathrm{SM}(\Pi_1)$ and $\mathrm{SM}(\Pi_2)$, which implies that $\textsc{enc}$ is interpreted as an encoding function $\textsl{enc}$ on $A$, $c_{\epsilon}$ and $c_P$ for all predicates $P\in\tau$ are interpreted as elements not in the range of $\textsl{enc}$, predicates $\textsc{mrg},\textsc{ext},\textsc{in},\textsc{subc},\textsc{equ}$ are interpreted as the corresponding functions or predicates $\textsl{mrg},\textsl{ext},\textsl{in},\textsl{subc},\textsl{equ}$ related to $\textsl{enc}$ and $c_{\epsilon}^{\mathds{C}}$. By equation (\ref{eqn:splitting}) again, $\mathds{C}$ is a stable model of $\Pi_3$ and $\Pi_4$, and by Proposition \ref{prop:fxp2sm}, $\textsc{Ins}(\mathds{C},\upsilon_3)$ is then a minimal model of $\Gamma^{\mathds{B}}_{\Pi_3}\uparrow\omega$. So, we have $\mathds{C}=\mathds{B}^+$. According to Proposition \ref{prop:fo2prop}, $\textsc{Ins}(\mathds{C},\tau)$ should be a minimal model of $\Pi_5^{\mathds{C}}$ as $\mathds{C}$ is clearly a stable model of $\Pi_5$. Applying Claim 2, we then have that $\textsc{Ins}(\mathds{A},\tau)$ is a minimal model of $\Gamma^{\mathds{A}}_{\Pi}\uparrow\omega$. Thus, by Proposition \ref{prop:fxp2sm}, $\mathds{A}$ should be a stable model of $\Pi$.
\end{proof}

\begin{rem}
Note that, given any finite domain $A$, there is no injective function from $A\times A$ into $A$. Therefore, we can not expect that the above translation works on finite structures.
\end{rem}

From the above theorem, we then get the following result:

\begin{cor}
$\mathrm{DLP}\simeq_{\mathsf{INF}}\mathrm{NLP}$.
\end{cor}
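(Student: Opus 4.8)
The plan is to prove the two inequalities $\mathrm{NLP}\leq_{\mathsf{INF}}\mathrm{DLP}$ and $\mathrm{DLP}\leq_{\mathsf{INF}}\mathrm{NLP}$ separately; their conjunction is exactly $\mathrm{DLP}\simeq_{\mathsf{INF}}\mathrm{NLP}$. The first inequality is immediate and requires no infiniteness: a normal logic program is by definition a disjunctive logic program in which every rule head carries at most one atom, and the predicate auxiliary constants admissible for $\mathrm{NLP}$ are exactly those admissible for $\mathrm{DLP}$. Consequently every formula $\exists\tau\mathrm{SM}(\Pi)$ belonging to $\mathrm{NLP}$ already belongs to $\mathrm{DLP}$, so one may take $\psi=\varphi$; thus $\mathrm{NLP}\subseteq\mathrm{DLP}$, and hence $\mathrm{NLP}\leq_{\mathcal{C}}\mathrm{DLP}$ for every class $\mathcal{C}$, in particular for $\mathcal{C}=\mathsf{INF}$.

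For the converse $\mathrm{DLP}\leq_{\mathsf{INF}}\mathrm{NLP}$ I would take an arbitrary $\varphi=\exists\tau\mathrm{SM}(\Pi)$ in $\mathrm{DLP}$ and invoke the preceding theorem, which yields $\mathrm{SM}(\Pi)\equiv_{\mathsf{INF}}\exists\varsigma\mathrm{SM}(\Pi^{\diamond})$ with $\Pi^{\diamond}$ normal and $\varsigma=\upsilon(\Pi^{\diamond})\setminus\upsilon(\Pi)$. Prefixing $\exists\tau$ to both sides preserves $\mathsf{INF}$-equivalence, since quantifying auxiliary constants leaves the domain untouched and still infinite, so each relevant expansion is again an infinite structure to which the theorem applies; as $\tau$ and $\varsigma$ are disjoint this gives $\varphi\equiv_{\mathsf{INF}}\exists(\tau\cup\varsigma)\mathrm{SM}(\Pi^{\diamond})$. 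The free vocabulary is unchanged, because $\varsigma$ contains only constants not occurring in $\Pi$, so that $\upsilon(\Pi^{\diamond})\setminus(\tau\cup\varsigma)=\upsilon(\Pi)\setminus\tau$, and $\tau\cup\varsigma$ is finite since both $\tau$ and $\Pi^{\diamond}$ are. As $\Pi^{\diamond}$ is normal, the right-hand side is a formula built from a normal program, hence a member of $\mathrm{NLP}_{\mathrm{F}}$; modulo the adjustment described below it is even a member of $\mathrm{NLP}$ and serves as the witness $\psi$. All the conceptual work has thereby been absorbed into the theorem, in particular into the construction of $\Pi^{\diamond}$ and its Claims~1 and~2.

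The one genuine subtlety, and once the theorem is granted the only nonroutine step, is that $\Pi^{\diamond}$ introduces the individual constants $c_{P}$ and $c_{\epsilon}$, which are nullary function constants; quantifying them existentially would land $\psi$ in $\mathrm{NLP}_{\mathrm{F}}$ rather than in $\mathrm{NLP}$, whose auxiliary constants must be predicates. I would eliminate them by the standard singleton encoding: replace each such constant $c$ by a fresh unary predicate $D_{c}$, add the constraints forcing $D_{c}$ to hold of at least one element (through an auxiliary proposition derived from $D_{c}$ together with a constraint forbidding its falsity) and of at most one element, add the disjointness constraints $D_{c}(x)\wedge D_{c'}(x)\rightarrow\bot$ wherever the encoding needs distinct flags, and rewrite each occurrence of $c$ by a fresh variable $x_{c}$ guarded in the rule body by the conjunct $D_{c}(x_{c})$. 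Since $D_{c}$ is then forced to name a single element, existentially quantifying $D_{c}$ is equivalent to existentially choosing the element formerly denoted by $c$, so this rewriting preserves stable models over every structure and keeps the program normal. After it, every quantified auxiliary constant is a predicate, the witness $\psi$ lies in $\mathrm{NLP}$, and combining the two inequalities delivers $\mathrm{DLP}\simeq_{\mathsf{INF}}\mathrm{NLP}$.
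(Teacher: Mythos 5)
Your proof is correct and follows essentially the same route as the paper, which obtains this corollary directly from the preceding theorem: the inclusion $\mathrm{NLP}\subseteq\mathrm{DLP}$ gives one direction, and prefixing $\exists\tau$ to the equivalence $\mathrm{SM}(\Pi)\equiv_{\mathsf{INF}}\exists\varsigma\mathrm{SM}(\Pi^{\diamond})$ gives the other. The only difference is that you spell out the elimination of the auxiliary individual constants $c_P$ and $c_{\epsilon}$ via existentially quantified singleton unary predicates--a step the paper treats as routine (cf.\ its footnote that functions in normal logic programs can be simulated by predicates, and its later use of the same trick for $\textsc{Parity}$)--and your construction of that step is sound.
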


\section{Finite Structures}

In this section, we will focus on the relationship between disjunctive and normal logic programs over finite structures.  In the general case, the separation of their expressive power is turned out to be very difficult due to the following result\footnote{A similar result over function-free Herbrand structures follows from the expressive power of traditional answer set programs. Here is a reformulation of it under the general stable model semantics.}:

\begin{prop}\label{prop:fin_dlp2nlp}
$\mathrm{DLP}\simeq_{\mathsf{FIN}}\mathrm{NLP}$ iff $\mathrm{NP}=\mathrm{coNP}$.
\end{prop}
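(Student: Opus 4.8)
The plan is to recast the statement as a comparison of the descriptive complexity of the two formalisms over finite structures and then to finish with standard facts about the polynomial hierarchy. Since every normal program is a disjunctive one, the inclusion $\mathrm{NLP}\leq_{\mathsf{FIN}}\mathrm{DLP}$ holds trivially, so the whole content is the equivalence ``$\mathrm{DLP}\leq_{\mathsf{FIN}}\mathrm{NLP}$ iff $\mathrm{NP}=\mathrm{coNP}$''. The backbone will be two capture theorems, which are the reformulations under the general stable model semantics alluded to in the footnote: first, that $\mathrm{NLP}$ captures exactly $\mathrm{NP}$ over $\mathsf{FIN}$. The upper bound holds because, for a fixed normal $\Pi$, recognizing a finite model of $\exists\tau\mathrm{SM}(\Pi)$ amounts to guessing an interpretation of the bounded-arity (hence polynomially many) auxiliary and intensional predicates and then testing stability, which by the Remark following Proposition~\ref{prop:fxp2sm} reduces to computing the least fixpoint $\Gamma^{\mathds{A}}_{\Pi}\uparrow\omega$ in polynomial time and comparing it with $\textsc{Ins}(\mathds{A},\tau)$; the lower bound follows from Fagin's theorem ($\mathrm{NP}=\mathrm{ESO}$) by simulating the existential second-order quantifiers with freely guessed predicates (the even-negation idiom already used in rules~(2)--(3) of $\Pi_1$) and evaluating the first-order matrix through auxiliary predicates naming its subformulae.

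The second capture theorem is that $\mathrm{DLP}$ captures exactly $\Sigma_2^p$ over $\mathsf{FIN}$. For the upper bound one again guesses the auxiliary interpretation, but verifying stability now invokes Proposition~\ref{prop:fo2prop}: one must check that $\textsc{Ins}(\mathds{A},\tau)$ is a \emph{minimal} model of the positive disjunctive reduct $\Pi^{\mathds{A}}$, and ``no strictly smaller model exists'' is a co-$\mathrm{NP}$ test over the finite ground base, so the recognition problem lies in $\exists\forall=\Sigma_2^p$. The matching lower bound $\Sigma_2^p\subseteq\mathrm{DLP}$ is the disjunctive analogue of Fagin's theorem, obtained by the classical saturation technique for disjunctive answer set programming, in which minimality of stable models is harnessed to realise the inner universal second-order quantifier. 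For the present proposition I in fact only need the consequence $\mathrm{coNP}\subseteq\mathrm{DLP}$, since $\mathrm{coNP}=\Pi_1^p\subseteq\Sigma_2^p$.

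Granting these two results, both directions are short. For $\Leftarrow$, assume $\mathrm{NP}=\mathrm{coNP}$; then $\mathrm{PH}$ collapses to its first level, so $\Sigma_2^p=\mathrm{NP}$. Any $\Phi\in\mathrm{DLP}$ defines a $\Sigma_2^p$, hence an $\mathrm{NP}$, class of finite structures, which by the $\mathrm{NLP}$ lower bound is the finite-model class of some $\Psi\in\mathrm{NLP}$, giving $\Phi\equiv_{\mathsf{FIN}}\Psi$; with the trivial $\mathrm{NLP}\leq_{\mathsf{FIN}}\mathrm{DLP}$ this yields $\mathrm{DLP}\simeq_{\mathsf{FIN}}\mathrm{NLP}$. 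For $\Rightarrow$, assume $\mathrm{DLP}\leq_{\mathsf{FIN}}\mathrm{NLP}$ and fix a co-$\mathrm{NP}$-complete class $\mathcal{C}$, say non-$3$-colourability of graphs. By $\mathrm{coNP}\subseteq\mathrm{DLP}$ it is defined by some $\Phi\in\mathrm{DLP}$; by hypothesis it is then defined by some $\Psi\in\mathrm{NLP}$, so the $\mathrm{NLP}$ upper bound places $\mathcal{C}\in\mathrm{NP}$, whence $\mathrm{coNP}\subseteq\mathrm{NP}$, i.e.\ $\mathrm{NP}=\mathrm{coNP}$.

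I expect the genuine difficulty to lie in the two capture theorems rather than in the complexity bookkeeping. The classical $\mathrm{ESO}$-simulation and disjunctive saturation constructions are stated over function-free Herbrand structures, and the real work is to make them faithful \emph{under the general stable model semantics over abstract finite structures}, where the domain is an arbitrary finite set and equality together with the non-auxiliary constants are interpreted freely; one must ensure that the first-order evaluation and the minimality-driven universal quantification still behave as intended in this setting. By contrast, the polynomial-time fixpoint evaluation for normal reducts, the co-$\mathrm{NP}$ minimality test for disjunctive reducts, and the implication $\mathrm{NP}=\mathrm{coNP}\Rightarrow\Sigma_2^p=\mathrm{NP}$ are routine.
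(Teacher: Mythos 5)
Your proposal is correct and takes essentially the same route as the paper: both reduce the statement to the two capture results over finite structures (normal programs capture $\mathrm{NP}$, i.e.\ $\mathrm{ESO}$ via Fagin, which the paper gets from Lemma~\ref{lem:nlp2eso}; disjunctive programs capture $\Sigma^p_2$, i.e.\ $\Sigma^1_2$ via Stockmeyer, which the paper gets from the construction of Theorem~6.3 in~\cite{EGM:97} or Lemma~\ref{lem:so2dlp}) and then finish with the routine equivalence $\Sigma^p_2=\mathrm{NP}$ iff $\mathrm{NP}=\mathrm{coNP}$. The only difference is presentational --- you unwind the complexity bookkeeping with an explicit $\mathrm{coNP}$-complete problem and argue the machine-level upper bounds directly from Propositions~\ref{prop:fo2prop} and~\ref{prop:fxp2sm}, whereas the paper states everything as equivalences between logics --- and you correctly identify that the real work lies in adapting the classical Herbrand-style constructions to the general stable model semantics over arbitrary finite structures, which is exactly what the paper's lemmas do.
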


\begin{proof}
Let $\Sigma^1_2$ denote the set of sentences of the form $\exists\tau\forall\sigma\varphi$, where $\tau$ and $\sigma$ are finite sets of predicate variables, $\varphi$ is a first-order formula. Let $\mathrm{ESO}$ be the set of sentences of the above form such that $\sigma$ is empty.
By Fagin Theorem~\shortcite{Fagin:74} and Stockmeyer's characterization of the polynomial hierarchy~\shortcite{Stoc:77}, we have that $\Sigma^1_2\simeq_{\mathsf{FIN}}\mathrm{ESO}$ iff $\Sigma^p_2=\mathrm{NP}$. By a routine complexity theoretical argument, we also have that $\Sigma^p_2=\mathrm{NP}$ iff $\mathrm{NP}=\mathrm{coNP}$. On the other hand, according to the proof of Theorem 6.3 in~\cite{EGM:97}, or by Lemma \ref{lem:so2dlp} in this section, Leivant's Normal Form~\shortcite{Leiv:89} and the definition of $\mathrm{SM}$, we can conclude that $\mathrm{DLP}\simeq_{\mathsf{FIN}}\Sigma^1_2$; by Lemma \ref{lem:nlp2eso} in this section\footnote{Note that functions in both existential second-order logic and normal logic programs can be easily simulated by predicates.}, it holds that $\mathrm{NLP}\simeq_{\mathsf{FIN}}\mathrm{ESO}$. Combining these results, we then have the desired proposition.
\end{proof}

Due to the significant difficulty of general separation, the rest of this section is devoted to a weaker separation between disjunctive and normal programs. To do this, we first study some relationship between logic programs and classical logic. In the following, let $\mathrm{ESO}^k_{\mathrm{F}}[\forall^{\ast}]$ denote the set of all sentences of the form $\exists\tau\forall\bar{x}\varphi$, where $\tau$ is a finite set of predicate and function variables of arity $\leq k$, and $\varphi$ is quantifier-free.


\begin{lem}\label{lem:nlp2eso}
$\mathrm{NLP}^k_{\mathrm{F}}\simeq_{\mathsf{FIN}}\mathrm{ESO}^k_{\mathrm{F}}[\forall^{\ast}]$ for all $k>1$.
\end{lem}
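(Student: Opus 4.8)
The plan is to prove the two inclusions $\mathrm{ESO}^k_{\mathrm{F}}[\forall^{\ast}]\leq_{\mathsf{FIN}}\mathrm{NLP}^k_{\mathrm{F}}$ and $\mathrm{NLP}^k_{\mathrm{F}}\leq_{\mathsf{FIN}}\mathrm{ESO}^k_{\mathrm{F}}[\forall^{\ast}]$ separately, the first being the easy one (it in fact holds over arbitrary structures). Given a sentence $\exists\tau\forall\bar{x}\varphi$ with $\varphi$ quantifier-free, I would write $\neg\varphi$ in disjunctive normal form as $\bigvee_j D_j$, each $D_j$ a conjunction of literals, and let $\Pi$ be the normal program of constraints $D_j\rightarrow\bot$. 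Since no intensional predicate occurs in $\Pi$, the second-order minimality conjunct of $\mathrm{SM}(\Pi)$ is vacuous, so $\mathrm{SM}(\Pi)$ is equivalent to $\forall\bar{x}\bigwedge_j\neg D_j$, i.e. to $\forall\bar{x}\varphi$. Treating the members of $\tau$ as auxiliary constants (the function variables being admissible since the $\mathrm{F}$-version of $\mathrm{NLP}$ permits auxiliary function constants of arity $\leq k$) then yields $\exists\tau\mathrm{SM}(\Pi)\equiv\exists\tau\forall\bar{x}\varphi$, placing the sentence in $\mathrm{NLP}^k_{\mathrm{F}}$.

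For the converse I would rely on the progression characterisation. By the Remark following Proposition~\ref{prop:fxp2sm}, an expansion $\mathds{A}$ of a structure by a choice of the auxiliary constants is a stable model of a normal program $\Pi$ with intensional predicates $\tau$ exactly when $\textsc{Ins}(\mathds{A},\tau)=\Gamma^{\mathds{A}}_{\Pi}\uparrow\omega$. I would split this equality: the inclusion $\textsc{Ins}(\mathds{A},\tau)\supseteq\Gamma^{\mathds{A}}_{\Pi}\uparrow\omega$ together with closure under the reduct operator is exactly the assertion that $\mathds{A}$ satisfies the universal first-order sentence $\varphi=\bigwedge_{\gamma\in\Pi}\forall(\gamma)$ (whose matrix is quantifier-free), while the reverse inclusion is a foundedness condition witnessed by a guessed well-founded ranking of the true intensional atoms. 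Concretely, the target sentence existentially quantifies (i) the original auxiliary constants of $\Pi$, (ii) a binary relation forced by universal axioms to be a linear order $<$, (iii) for each intensional predicate $P$ a family of functions $f^P_1,\dots,f^P_\ell$ whose joint value $\langle f^P_1(\bar{a}),\dots,f^P_\ell(\bar{a})\rangle$, read lexicographically under $<$, is the rank of $P(\bar{a})$, and (iv) for each rule $\rho$ a selection predicate and Skolem functions supplying the witnessing values of $\rho$'s body-only variables. Its quantifier-free matrix, universally closed, asserts that $<$ is a linear order, that $\varphi$ holds, and that every true intensional atom $P(\bar{a})$ is founded, i.e. some selected $\rho$ has its body satisfied under the guessed instantiation with each positive intensional body atom of strictly smaller rank.

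Correctness runs both ways: if $\mathds{A}$ is a stable model I take $<$ arbitrary, let the rank of an atom be the $<$-encoding of the least stage at which it enters $\Gamma^{\mathds{A}}_{\Pi}\uparrow n$, and let the selection and Skolem data record a rule witnessing that entry; conversely, satisfaction of the matrix makes $\textsc{Ins}(\mathds{A},\tau)$ closed under the reduct and, by induction on the well-founded rank, contained in $\Gamma^{\mathds{A}}_{\Pi}\uparrow\omega$, so the two sets coincide and $\mathds{A}$ is a stable model by Propositions~\ref{prop:fxp2sm} and~\ref{prop:fo2prop}. The resulting sentence has the shape $\exists\tau'\forall\bar{y}\psi$ with $\psi$ quantifier-free, hence lies in $\mathrm{ESO}^k_{\mathrm{F}}[\forall^{\ast}]$.

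The delicate point, which I expect to be the main obstacle, is the arity bookkeeping: every guessed object must have arity $\leq k$. The order has arity $2\leq k$, and the selection predicates and Skolem functions attached to a rule with head predicate $P$ inherit the arity of $P$, hence $\leq k$. The rank, however, must range over at least as many values as there are ground intensional atoms, roughly $N^k$ on a domain of size $N$; the crucial trick is to realise it not as a single high-arity predicate on (atom, level) pairs — which would force arity about $2k$ — but as a tuple assembled from several arity-$\leq k$ function symbols $f^P_i$, so that lengthening the rank tuple costs extra functions rather than extra arity. This is precisely the phenomenon that the companion intranslatability theorem shows to be unavoidable below arity $2k$ once functions are disallowed, so the use of function symbols is essential and is the heart of the lemma. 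I would finally dispose of the few anomalously small domains (where $N^\ell$ may undercount the atoms, and where the Skolemisation of the existential ``there is a deriving instance'' into a universal matrix needs care) by an explicit description, using that $\mathrm{ESO}^k_{\mathrm{F}}[\forall^{\ast}]$ is closed under conjunction and, via a guessed propositional flag, under disjunction.
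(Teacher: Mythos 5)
Your proposal follows essentially the same route as the paper's proof: the easy direction via a constraint-only program whose minimality condition is vacuous, and the hard direction by guessing a strict total order, encoding the rank of each ground intensional atom as a lexicographically-read tuple of arity-$\leq k$ function values (the paper's functions $o_P^1,\dots,o_P^c$ with $c=k\cdot|\tau|+1$), Skolemizing the body witnesses, and proving correctness through the progression semantics of Proposition~\ref{prop:fxp2sm} by induction on rank. The only substantive difference is your explicit handling of anomalously small domains, an edge case the paper's proof glosses over, and your cosmetic use of per-rule selection predicates where the paper keeps a finite disjunction (routed through its $\textsc{drvbl}$ predicate) in the universal matrix.
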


\begin{proof}
``$\geq_{\mathsf{FIN}}$": Let $\varphi$ be a sentence in $\mathrm{ESO}^k_{\mathrm{F}}[\forall^{\ast}]$. It is clear that $\varphi$ can be rewritten as an equivalent sentence of the form $\exists\tau\forall\bar{x}(\gamma_1\wedge\cdots\wedge\gamma_n)$ for some $n\geq 0$, where each $\gamma_i$ is a disjunction of atoms or negated atoms, and $\sigma$ a finite set of functions or predicates of arity $\leq k$. Let $\Pi$ be a logic program consisting of the rule $\tilde{\gamma}_i\rightarrow\bot$ for each $1\leq i\leq n$, where $\tilde{\gamma}_i$ is obtained from $\gamma_i$ by substituting $\vartheta$ for each negated atom $\neg\vartheta$, substituting $\neg\vartheta$ for each atom $\vartheta$, and substituting $\wedge$ for $\vee$. Obviously, $\exists\sigma\mathrm{SM}(\Pi)$ is in $\mathrm{NLP}^k_{\mathrm{F}}$ and equivalent to $\varphi$.

``$\leq_{\mathsf{FIN}}$": Let $\exists\sigma\mathrm{SM}(\Pi)$ be a formula in $\mathrm{NLP}^k_{\mathrm{F}}$ such that $\Pi$ is a normal logic program. Without loss of generality, we assume the head of each rule in $\Pi$ is of the form $P(\bar{x})$ for some integer $l\geq 0$ and $l$-ary intensional predicate $P$ of $\Pi$. Let $\tau$ and $S$ be the sets of all intensional predicates and atoms of $\Pi$ respectively. Let $c=k\cdot|\tau|+1$. For each $\lambda\in S$, suppose $\gamma_1,\dots,\gamma_{n}$ list all rules in $\Pi$ whose heads are $\lambda$. Suppose\vspace{-.12cm}
\begin{equation*}
\gamma_i=\zeta^i\wedge\vartheta^i_1\wedge\cdots\wedge\vartheta^i_{m_i}\rightarrow\lambda\vspace{-.12cm}
\end{equation*}
where $\vartheta^i_1,\dots,\vartheta^i_{m_i}$ are intensional atoms, $\zeta^i$ is a conjunction of literals that are not intensional atoms of $\Pi$, $m_i\geq 0$, and $\bar{y}_i$ is the tuple of all individual variables occurring in $\gamma_i$ but not in $\lambda$.
Now, we then define $\varphi_{\lambda}$ as the conjunction of formulae\vspace{-.12cm}
\begin{equation*}
(\zeta^i\wedge\vartheta^i_1\wedge\cdots\wedge\vartheta^i_{m_i})\vee\lambda\rightarrow\textsc{drvbl}(\lambda)\vspace{-.12cm}
\end{equation*}
for all $1\leq i\leq n$, and define $\psi_{\lambda}$ as the formula\vspace{-.12cm}
\begin{equation*}
\textsc{drvbl}(\lambda)\rightarrow\lambda\wedge\bigvee_{i=1}^{n}\exists\bar{y}_i\left[\zeta^i\wedge\bigwedge_{j=1}^{m_i}\textsc{drvless}(\vartheta^i_j,\lambda)\right]\vspace{-.12cm}
\end{equation*}
where $<$ is a new binary predicate and $\max$ a new individual constant, which are intended to be interpreted as a strict total order and the maximal element in the order respectively; $\bar{s}<\bar{t}$ is the formula describing that $\bar{s}$ is strictly less than $\bar{t}$ in the lexicographic order generated by $<$ if $\bar{s}$ and $\bar{t}$ are two tuples of terms of the same length; $\overline{\max}$ denotes the tuple $(\max,\dots,\max)$ of length $c$; $o_P^i$ is a new function constant of the same arity as $P$ if $1\leq i\leq c$ and $P$ is an intensional predicate of $\Pi$; $\mathrm{ord}(\vartheta)$ is the tuple $(o_P^c(\bar{t}),\cdots,o_P^{1}(\bar{t}))$ if $\vartheta$ is an intensional atom of form $P(\bar{t})$; $\textsc{drvbl}(\vartheta)$ denotes formula $\mathrm{ord}(\vartheta)<\overline{\max}$ for all atoms $\vartheta$; $\textsc{drvless}(\vartheta_1,\vartheta_2)$ denotes formula $\mathrm{ord}(\vartheta_1)<\mathrm{ord}(\vartheta_2)$ for all atoms $\vartheta_1$ and $\vartheta_2$.

Define $\varphi_{\Pi}$ as the universal closure of conjunction of the formula $\varpi$ and formulae $\varphi_{\lambda}\wedge\psi_{\lambda}$ for all $\lambda\in S$, where $\varpi$ is a sentence in $\mathrm{ESO}^2_{\mathrm{F}}[\forall^{\ast}]$ which describes that $<$ is a strict total order and $\max$ is the largest element in that order. By a moment's thought, we can construct such a sentence. Clearly, $\varphi_{\Pi}$ is equivalent to a sentence in $\mathrm{ESO}^k_{\mathrm{F}}[\forall^{\ast}]$ by introducing Skolem functions. Now we show that $\mathrm{SM}(\Pi)\equiv_{\mathsf{FIN}}\exists\sigma\varphi_{\Pi}$, where $\sigma$ is the set of constants occurring in $\varphi_{\Pi}$ but not in $\Pi$.

Due to the limit of space, we only show ``$\Longleftarrow$". Assume $\mathds{B}$ is a finite model of $\varphi_{\Pi}$. By the formula $\varpi$, the predicate $<$ is interpreted as a strict total order on $B$ and $\max^{\mathds{B}}$ is the maximal element with respect to the order. Let $\mathds{A}$ be the restriction of $\mathds{B}$ to the vocabulary $\upsilon(\Pi)$. To show that $\mathds{A}$ is a stable model of $\Pi$, by Proposition \ref{prop:fxp2sm} it suffices to show that $\textsc{Ins}(\mathds{A},\tau)=\Gamma^{\mathds{A}}_{\Pi}\uparrow\omega$. Through a routine induction on $n$, we can show that $\Gamma^{\mathds{A}}_{\Pi}\uparrow\omega\subseteq\textsc{Ins}(\mathds{A},\tau)$ for all integers $n\geq 0$. For all $P,Q$ in $\tau$ and all tuples $\bar{a},\bar{b}$ on $A$ of lengths corresponding to arities of $P,Q$ respectively, we define $P_{\bar{a}}\prec Q_{\bar{b}}$ iff $\mathds{B}$ satisfies $\textsc{drvless}(P(\bar{a}),Q(\bar{b}))$. Let $\vartheta\in\textsc{Ins}(\mathds{A},\tau)$, and let $\mathrm{rank}(\vartheta)$ be the number of intensional grounded atoms $\zeta$ over $A$ such that $\zeta\prec\vartheta$. By an induction on $\mathrm{rank}(\vartheta)$, we can show $\vartheta\in\Gamma^{\mathds{A}}_{\Pi}\uparrow\omega$.
Therefore, $\textsc{Ins}(\mathds{A},\tau)=\Gamma^{\mathds{A}}_{\Pi}\uparrow\omega$.
\end{proof}

By notation $\Sigma^1_{2,k}[\forall^k\exists^{\ast}]$ we denote the set of all sentences of the form $\exists\tau\forall\sigma\forall\bar{x}\exists\bar{y}\psi$, where $\tau$ and $\sigma$ are two finite sets of predicate variables of arity $\leq k$, $\bar{x}$ is a $k$-tuple of individual variables, $\psi$ is quantifier-free. Now we then have:

\begin{lem}\label{lem:so2dlp}
$\Sigma^{1}_{2,k}[\forall^k\exists^{\ast}]\leq_{\mathsf{FIN}}\mathrm{DLP}^k$ for all $k>1$.
\end{lem}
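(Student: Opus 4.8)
The plan is to refine the classical saturation encoding witnessing $\mathrm{DLP}\simeq_{\mathsf{FIN}}\Sigma^1_2$ (cf.\ the proof of Theorem 6.3 in \cite{EGM:97}) so that every auxiliary predicate keeps arity $\leq k$. Fix a sentence $\exists\tau\forall\sigma\forall\bar x\exists\bar y\,\psi$ in $\Sigma^1_{2,k}[\forall^k\exists^{\ast}]$ over a free vocabulary $\upsilon$, and I will build a disjunctive program $\Pi$ whose intensional predicates all have arity $\leq k$, together with the set $\rho$ of its non-$\upsilon$ constants, so that $\exists\rho\,\mathrm{SM}(\Pi)$ is equivalent over finite structures to that sentence. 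The only place where arities threaten to exceed $k$ is the evaluation of the first-order matrix $\forall\bar x\exists\bar y\,\psi$; the central idea is to evaluate it \emph{positively} by an induction along a guessed linear order, which is a binary relation --- this is precisely where the hypothesis $k>1$ is used.

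I assemble $\Pi$ in two strata. The \emph{bottom} stratum fixes everything that must stay constant while $\sigma$ ranges: a standard subprogram guessing a strict linear order $<$ (arity $2\le k$) with auxiliary unary predicates marking non-minimal and non-maximal elements and a binary immediate-successor relation; and, for each $R\in\tau$, a disjunctive choice $R(\bar u)\vee R'(\bar u)$ with a fresh complement predicate $R'$ (and a trivially true body). As nothing forces both disjuncts, a stable model of this stratum chooses exactly one of $R,R'$ at each tuple, i.e.\ freely guesses $\tau$ and a linear order. The \emph{top} stratum performs the co-NP test for the fixed bottom: for each $S\in\sigma$ a disjunctive choice $S(\bar u)\vee S'(\bar u)$; a positive definition of a $k$-ary predicate $\textsc{good}(\bar x)$ standing for $\exists\bar y\,\psi(\bar x,\bar y)$, obtained by putting $\psi$ in disjunctive normal form and adding, for each term $T_d$, a rule $\widetilde{T}_d\rightarrow\textsc{good}(\bar x)$ in which $\widetilde{T}_d$ is $T_d$ with every negated $\tau$/$\sigma$-literal $\neg R(\bar t)$, $\neg S(\bar t)$ replaced by $R'(\bar t)$, $S'(\bar t)$, literals over $\upsilon$ kept unchanged, and $\bar y$ left as body variables (so $\exists\bar y$ becomes projection); an accumulation predicate $\textsc{all}(\bar x)$, with a base rule firing at the lexicographically least tuple and a step rule $\textsc{good}(\bar x)\wedge\textsc{all}(\bar x')\rightarrow\textsc{all}(\bar x)$ in which ``$\bar x'$ is the lexicographic predecessor of $\bar x$'' is spelled out in the body using only the \emph{binary} relations $<$ and successor; a rule deriving the proposition $\textsc{sat}$ from $\textsc{all}$ at the greatest tuple; the saturation rules $\textsc{sat}\rightarrow S(\bar u)$, $\textsc{sat}\rightarrow S'(\bar u)$, $\textsc{sat}\rightarrow\textsc{good}(\bar x)$, $\textsc{sat}\rightarrow\textsc{all}(\bar x)$; and the constraint $\neg\textsc{sat}\rightarrow\bot$. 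Every auxiliary predicate then has arity $\le k$ ($\textsc{good},\textsc{all}$ are $k$-ary, $<$ and successor binary, $\textsc{sat}$ nullary, $R,R',S,S'$ of arity $\le k$), and $\Pi$ uses no auxiliary function, so $\exists\rho\,\mathrm{SM}(\Pi)\in\mathrm{DLP}^k$.

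For correctness I first apply the Splitting Lemma of \cite{FLLP:09} (as in the previous theorem) to peel off the bottom stratum: its stable models are exactly the expansions fixing a linear order and a $\tau$-guess, and over each such fixed bottom the top stratum may be analysed separately. With the order and $\tau$-guess fixed, I run the saturation argument through Proposition \ref{prop:fo2prop}. The $\sigma$-saturated interpretation $M$ (all of $S,S',\textsc{good},\textsc{all},\textsc{sat}$ true) is a classical model, and by the constraint every model is forced to be saturated, so $M$ is the only candidate. In the first-order GL-reduct the constraint disappears (its negative body $\neg\textsc{sat}$ is false under $M$) while the saturation rules survive, so a proper submodel can only be a \emph{genuine} $\sigma$-guess that fails to derive $\textsc{sat}$. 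Because the derivation of $\textsc{good}/\textsc{all}/\textsc{sat}$ is positive in the guessed $\sigma$-atoms --- this is exactly the point of the complement substitution and of replacing the universal over $\bar x$ by the positive recursion $\textsc{all}$ --- such a guess derives $\textsc{sat}$ iff it satisfies $\forall\bar x\exists\bar y\,\psi$, and $\textsc{all}$ at the top tuple computes $\forall\bar x\,\textsc{good}(\bar x)$ for \emph{any} guessed order. Hence, by Proposition \ref{prop:fo2prop}, $M$ is a stable model iff $\forall\sigma\,(\forall\bar x\exists\bar y\,\psi)$ holds of the fixed $\tau$-guess. Existentially projecting $\rho$ (which re-quantifies $\tau$, the order, and all bookkeeping predicates), and noting that a linear order always exists on a finite domain, I obtain $\exists\rho\,\mathrm{SM}(\Pi)\equiv_{\mathsf{FIN}}\exists\tau\forall\sigma\forall\bar x\exists\bar y\,\psi$.

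The main obstacle is the positive evaluation of $\forall\bar x\exists\bar y\,\psi$ under the arity bound. Read literally, $\textsc{sat}\equiv\forall\bar x\,\textsc{good}(\bar x)$ is a universal quantification, and the two obvious ways to realise it both fail: negation (say, deriving a failure atom from $\neg\textsc{good}$) breaks saturation, since under the saturated $M$ the reduct would delete exactly the rules one needs; while a direct successor on $k$-tuples is a $2k$-ary relation, breaking the arity bound. The ordered accumulation $\textsc{all}$ defeats both simultaneously, but it forces me to verify carefully that (a) the recursion is well founded and computes $\forall\bar x\,\textsc{good}$ for \emph{every} guessed linear order, so that $\textsc{sat}$ is order-invariant and the projection is sound, and (b) the lexicographic predecessor is expressed entirely inside rule bodies from the binary order and successor, so that $\textsc{good}$ and $\textsc{all}$ remain the only genuinely $k$-ary predicates. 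Checking that the bottom stratum generates all linear orders and that the splitting is clean (no top predicate occurs in a bottom body) is the remaining, routine, part.
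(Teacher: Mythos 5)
Your proposal is correct and takes essentially the same route as the paper's proof: both adapt the saturation construction of Theorem 6.3 in \cite{EGM:97} under the arity bound by guessing a binary successor/order, replacing negated guessed literals with complement predicates chosen by disjunctive rules, and evaluating the $\forall\bar{x}$ quantifier through a $k$-ary predicate that accumulates along the lexicographic order on $k$-tuples (your $\textsc{good}$/$\textsc{all}$ pair versus the paper's single predicate $D$), then saturating $\sigma$ upon success at the maximal tuple. Your explicit nullary $\textsc{sat}$ atom and constraint $\neg\textsc{sat}\rightarrow\bot$ are a cosmetic repackaging of the paper's rule $L(\bar{z})\wedge\neg D(\bar{z})\rightarrow D(\bar{z})$, and your DNF conversion plays the role of the paper's appeal to distributivity-like laws.
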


\begin{proof} (Sketch)
Let $\exists\tau\forall\sigma\varphi$ be any sentence in $\Sigma^{1}_{2,k}[\forall^k\exists^{\ast}]$ where $\tau,\sigma$ are finite sets of predicate variables of arities $\leq k$. Without loss of generality, suppose $\varphi=\forall\bar{x}\exists\bar{y}\vartheta(\bar{x},\bar{y})$, where $\bar{x}$ is of length $k$; $\vartheta$ is a formula built from literals and connectives $\wedge$ and $\vee$. By a modification of the program in the proof of Theorem 6.3 in~\cite{EGM:97}, we can obtain a disjunctive logic program with auxiliary predicates of arities $\leq k$ which defines the property expressed by $\exists\tau\forall\sigma\varphi$.

Let $\Pi_1$ be a logic program same as $\pi_1$ in the proof of Theorem 6.3 in~\cite{EGM:97} where the binary predicate $S$ defines a successor relation on the domain; the unary predicates $F$ and $L$ define the singleton sets that contain the least and the maximal elements in the order defined by $S$ respectively. Let $\Pi_2$ be the program consisting of (i) the rules $
X(\bar{u})\vee X^c(\bar{u})
$ for all predicate variables $X$ in $\tau$ or $\sigma$, (ii) the rules\vspace{-.12cm}
\begin{equation}
L(\bar{x})\wedge D(\bar{x})\rightarrow Y(\bar{u})\wedge Y^c(\bar{u})\vspace{-.12cm}
\end{equation}
for all predicate variables $Y$ in $\sigma$, (iii) the rules\vspace{-.12cm}
\begin{eqnarray}
\label{rule:so2dlp_u1}F(\bar{x})\wedge\vartheta^c(\bar{x},\bar{y})\rightarrow D(\bar{x})\\
\label{rule:so2dlp_u2}\bar{S}(\bar{z},\bar{x})\wedge D(\bar{z})\wedge\vartheta^c(\bar{x},\bar{y})\rightarrow D(\bar{x})
\end{eqnarray}\vspace{-.55cm}

\noindent
and (iv) the rule
$L(\bar{z})\wedge
\neg D(\bar{z})\rightarrow D(\bar{z})
$, where, for every predicate $X$ in $\tau$ or $\sigma$, $X^c$ is a new predicate of the same arity; $\vartheta^c$ is the formula obtained from $\vartheta$ by substituting $Y^c(\bar{t})$ for all occurrences of $\neg Y(\bar{t})$ if  $Y\in\sigma$ and $\bar{t}$ is a tuple of items; $D$ is a $k$-ary new predicate; $P(x_1,\dots,x_k)$ denotes the formula $P(x_1)\wedge\cdots\wedge P(x_k)$ if $P$ is $F$ or $L$; and $\bar{S}(\bar{x},\bar{z})$ denotes a quantifier-free formula defining the lexicographic order generated by $S$ on $k$-tuples. Note that, in general, formulae (\ref{rule:so2dlp_u1}) and (\ref{rule:so2dlp_u2}) are not rules defined previously. However, by applying some distributivity-like laws~\cite{CPV:05}, they can be replaced by an (strongly) equivalent set of rules.

Let $\Pi$ be the union of $\Pi_1$ and $\Pi_2$. By a similar but slightly more complex argument than that in Theorem 6.3 of~\cite{EGM:97}, we can show that $\forall\sigma\varphi\equiv_{\mathsf{FIN}}\exists\varsigma\mathrm{SM}(\Pi)$, where $\varsigma$ is the set of all predicates in $\upsilon(\Pi)$ but not in $\upsilon(\forall\sigma\varphi)$.
\end{proof}

With these two lemmas, we then have the following result:

\begin{thm}
$\mathrm{DLP}^{k}\not\leq_{\mathsf{FIN}}\mathrm{NLP}^{2k-1}_{\mathrm{F}}$ for all $k>1$.
\end{thm}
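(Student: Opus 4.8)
The plan is to reduce the statement to a purely classical arity-separation in second-order logic and then invoke an unconditional lower bound over unordered finite structures. Suppose, towards a contradiction, that $\mathrm{DLP}^{k}\leq_{\mathsf{FIN}}\mathrm{NLP}^{2k-1}_{\mathrm{F}}$. By Lemma~\ref{lem:so2dlp} we have $\Sigma^{1}_{2,k}[\forall^k\exists^{\ast}]\leq_{\mathsf{FIN}}\mathrm{DLP}^{k}$, and by Lemma~\ref{lem:nlp2eso} we have $\mathrm{NLP}^{2k-1}_{\mathrm{F}}\simeq_{\mathsf{FIN}}\mathrm{ESO}^{2k-1}_{\mathrm{F}}[\forall^{\ast}]$, in particular $\mathrm{NLP}^{2k-1}_{\mathrm{F}}\leq_{\mathsf{FIN}}\mathrm{ESO}^{2k-1}_{\mathrm{F}}[\forall^{\ast}]$. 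Chaining these through the assumed inclusion and using transitivity of $\leq_{\mathsf{FIN}}$ yields $\Sigma^{1}_{2,k}[\forall^k\exists^{\ast}]\leq_{\mathsf{FIN}}\mathrm{ESO}^{2k-1}_{\mathrm{F}}[\forall^{\ast}]$. Hence it suffices to establish the single classical statement $\Sigma^{1}_{2,k}[\forall^k\exists^{\ast}]\not\leq_{\mathsf{FIN}}\mathrm{ESO}^{2k-1}_{\mathrm{F}}[\forall^{\ast}]$, and the whole difficulty is concentrated there.

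I would emphasise that this target must be proved \emph{unconditionally}. Proposition~\ref{prop:fin_dlp2nlp} tells us that the unbounded-arity collapse $\mathrm{DLP}\simeq_{\mathsf{FIN}}\mathrm{NLP}$ is equivalent to $\mathrm{NP}=\mathrm{coNP}$, so any bounded-arity separation that is to survive even in the world where $\mathrm{NP}=\mathrm{coNP}$ cannot rest on a complexity-theoretic assumption. I would therefore not argue complexity-theoretically, but exhibit a concrete query and defeat every arity-$(2k-1)$ existential second-order definition by a combinatorial, model-theoretic argument over unordered structures, where symmetry can be exploited.

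Concretely, I would design a family of finite structures carrying a $\Sigma^1_2$ property whose natural certificate is a relation \emph{between} $k$-tuples: one guesses a $k$-ary relation $\tau$, then asserts a $\forall\sigma$ (co-NP-flavoured) condition forcing $\tau$ to link $k$-tuples to $k$-tuples, so that any faithful description inherently involves $2k$ coordinates. The property is arranged so that its defining matrix is $\forall^k\exists^{\ast}$ and its second-order variables have arity $\leq k$, placing it in $\Sigma^{1}_{2,k}[\forall^k\exists^{\ast}]$ (and hence, via Lemma~\ref{lem:so2dlp}, in $\mathrm{DLP}^{k}$). For the lower bound I would play a second-order Ehrenfeucht--Fra\"{\i}ss\'{e} / Ajtai--Fagin style game: given a purported defining sentence $\exists\rho\,\forall\bar{z}\,\psi$ with $\rho$ a tuple of predicates and functions of arity $\leq 2k-1$, I would produce two structures, one in the query and one not, on which the guessed arity-$(2k-1)$ objects $\rho$ cannot be told apart, because $(2k-1)$-ary relations (and the graphs of $(2k-1)$-ary functions) are too coarse to witness the pairing of $k$-tuples that the query depends on.

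The main obstacle is precisely this lower bound, and within it the treatment of \emph{function} variables. A function of arity $2k-1$ has a graph of arity $2k$, so a naive ``simulate functions by predicates'' reduction would spoil the arity bookkeeping; the separation works only because such a graph is functional and therefore realises at most $n^{2k-1}$ of the $n^{2k}$ possible $2k$-tuples, which is exactly what keeps it one dimension short of the genuine $2k$-ary pairing the query requires. Getting the game (equivalently, a counting/symmetry invariant) to defeat arity-$(2k-1)$ predicates \emph{and} arity-$(2k-1)$ functions simultaneously is the crux; once that is in hand, the resulting contradiction with the chained inclusion completes the proof. As this combinatorial core is lengthy, I would expect to present only its statement here and defer the detailed game analysis to a full version, as is done for the other intricate arguments in this paper.
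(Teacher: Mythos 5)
Your reduction step is sound and matches the paper's skeleton: by Lemma~\ref{lem:so2dlp} and Lemma~\ref{lem:nlp2eso}, the theorem follows once one exhibits a property in $\Sigma^{1}_{2,k}[\forall^k\exists^{\ast}]$ that is not expressible in $\mathrm{ESO}^{2k-1}_{\mathrm{F}}[\forall^{\ast}]$ over finite structures, and you are also right that this target must be unconditional. But from that point on your proposal has a genuine gap: the entire mathematical content of the theorem is exactly this classical separation, and you neither name a concrete query with a verified $\Sigma^{1}_{2,k}[\forall^k\exists^{\ast}]$ definition, nor prove (or cite) any lower bound against arity-$(2k-1)$ predicates and functions. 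Promising ``a family of structures'' and ``an Ajtai--Fagin style game'' whose analysis is deferred to a full version is not a proof; the deferred part is the theorem. Worse, the game you propose to play is against existential \emph{function} variables of arity $2k-1$, and your supporting intuition --- that a functional graph realises only $n^{2k-1}$ of the $n^{2k}$ possible tuples and is therefore ``one dimension short'' --- is a counting heuristic, not an argument; no known Ehrenfeucht--Fra\"{\i}ss\'{e}-type technique is cited or developed that defeats such functions directly at this arity.

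The paper closes precisely this gap with known results rather than a new game. It takes the concrete query $\textsc{Parity}^{2k}$ (an even number of $2k$-tuples in $P$), writes an explicit sentence of the form $\exists X\forall Y\forall\bar{x}(\varphi_2\wedge\varphi_3)$ with $X,Y$ of arity $k$, and pushes it into $\mathrm{DLP}^k$ via Lemma~\ref{lem:so2dlp} (handling the auxiliary order by a separate low-arity program and the Splitting Lemma). For the lower bound it does \emph{not} attack $(2k-1)$-ary functions head on: assuming $\textsc{Parity}^{2k}\in\mathrm{ESO}^{2k-1}_{\mathrm{F}}$, it applies the arity-doubling construction of~\cite{DLS:98} to get $\textsc{Parity}^{4k}\in\mathrm{ESO}^{4k-2}_{\mathrm{F}}$; now the function variables have arity $4k-2$, so replacing them by their graphs yields predicate variables of arity $4k-1$, which is \emph{strictly below} the query arity $4k$, and Ajtai's unconditional theorem~\cite{Ajtai:83} gives the contradiction. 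In other words, the known technique handles functions by lifting the query until function graphs fall under the predicate-only bound --- the opposite of your plan to keep the query at arity $2k$ and beat the functions combinatorially. If you want to salvage your outline, you should either carry out that doubling argument or supply a genuinely new game-theoretic proof; as written, the crux is missing.
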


\begin{proof}
(Sketch) Let $n$ be an integer $\geq 1$ and $\upsilon$ the vocabulary consisting of only an $n$-ary predicate $P$. Define $\textsc{Parity}^n$ to be the class of $\upsilon$-structures in each of which $P$ is interpreted as a set consisting of an even number of $n$-tuples. We first show that the property $\textsc{Parity}^{2k}$ is definable in $\mathrm{DLP}^k$.

Let $\varphi_1$ be a formula asserting ``$S$ is interpreted as a successor relation on the domain; $0$ and $m$ are interpreted as the least and the maximal elements in the order defined by $S$ respectively".
Let $\varphi_2$ be the following formula:\vspace{-.15cm}
\begin{equation*}
\begin{aligned}
(Y(\bar{0})\leftrightarrow\,&P(\bar{x},\bar{0}))\wedge\forall\bar{u}\bar{v}[\bar{S}(\bar{u},\bar{v})\rightarrow
(P(\bar{x},\bar{v})\\&\leftrightarrow Y(\bar{v})\oplus Y(\bar{u}))]\rightarrow(X(\bar{x})\leftrightarrow Y(\bar{m}))
\end{aligned}\vspace{-.15cm}
\end{equation*}
where $\psi\oplus\chi$ denotes the formula $\psi\leftrightarrow\neg\chi$; $\bar{c}$ denotes the tuple $(c,\dots,c)$ of length $k$ if $c$ is $0$ or $m$. It is easy to see that $\varphi_2$ describes the property ``$X(\bar{a})$ is true iff the cardinality of the set $\{\bar{b}:P(\bar{a},\bar{b})\}$ is odd". Let $\varphi_3$ be the following formula:\vspace{-.15cm}
\begin{equation*}
\begin{aligned}
(X(\bar{0})\leftrightarrow\,&Y(\bar{0}))\wedge\forall\bar{u}\bar{v}[\bar{S}(\bar{u},\bar{v})\rightarrow\\
&(X(\bar{v})\leftrightarrow Y(\bar{v})\oplus Y(\bar{u}))]\rightarrow\neg Y(\bar{m})
\end{aligned}\vspace{-.15cm}
\end{equation*}
This formula asserts ``$X$ consists of an even number of $k$-tuples on the domain". Now let $\varphi$ be the following sentence:\vspace{-.15cm}
\begin{equation*}
\exists m\exists 0\exists S[\varphi_1\wedge\exists X\forall Y\forall\bar{x}(\varphi_2\wedge\varphi_3)]\vspace{-.15cm}
\end{equation*}
It is not difficult to check that $\varphi$ defines $\textsc{Parity}^{2k}$ over finite structures.
By Lemma \ref{lem:so2dlp}, there is a logic program $\Pi_{\mathrm{p}}$ such that $\exists X\forall Y\forall\bar{x}(\varphi_2\wedge\varphi_3)\equiv_{\mathsf{FIN}}\exists\tau\mathrm{SM}(\Pi_{\mathrm{p}})\in\mathrm{DLP}^k$, where $\tau$ is a finite set of predicates of arity $\leq k$. On the other hand, $\varphi_1$ can be easily encoded by a disjunctive logic program $\Pi_{\mathrm{o}}$\footnote{For example, such a program can be obtained by a slightly modification of $\pi_1$ in the proof of Lemma 6.4 in~\cite{EGM:97}.} involving only predicates of arity $\leq 2$. Hence, we have that\vspace{-.15cm}
$$\varphi\equiv_{\mathsf{FIN}}\exists m\exists 0\exists S[\exists\sigma\mathrm{SM}(\Pi_{\mathrm{o}})\wedge\exists\tau\mathrm{SM}(\Pi_{\mathrm{p}})].\vspace{-.15cm}$$
Let $\Pi$ be $\Pi_{\mathrm{o}}\cup\Pi_{\mathrm{p}}$. By Splitting Lemma in~\cite{FLLP:09},  $\varphi\equiv_{\mathsf{FIN}}\exists0\exists m\exists S\exists\sigma\exists\tau\mathrm{SM}(\Pi)$. Let $\Pi'$ be the program obtained from $\Pi$ by simulating individual constants $0$ and $m$ by unary predicates. This is then the desired program.

Next, we show that $\textsc{Parity}^{2k}$ is undefinable in $\mathrm{NLP}^{2k-1}_{\mathrm{F}}$ over finite structures.
By Lemma \ref{lem:nlp2eso}, it suffices to show that $\textsc{Parity}^{2k}$ is undefinable in $\mathrm{ESO}^{2k-1}_{\mathrm{F}}$. Towards a contradiction, assume that there is a sentence $\psi$ in this class such that finite models of $\psi$ are exactly the structures in $\textsc{Parity}^{2k}$. By employing an idea similar to that in Theorem 3.1 of~\cite{DLS:98}, we can then construct a formula $\psi_0$ in $\mathrm{ESO}^{4k-2}_{\mathrm{F}}$ to define $\textsc{Parity}^{4k}$. However, according to Theorem 2.1 of~\cite{Ajtai:83}, this is impossible since $\psi_0$ has an equivalent in $\mathrm{ESO}^{4k-1}$, i.e., the set of formulae in $\mathrm{ESO}^{4k-1}_{\mathrm{F}}$ without function variables. This then completes the proof.
\end{proof}


\section{Arbitrary Structures}

Based on the results presented in the previous two sections, we can then compare the expressive power of disjunctive and normal logic programs over arbitrary structures.


\begin{thm}\label{thm:arb_dlp2nlp}
$\mathrm{DLP}\simeq\mathrm{NLP}$ iff $\mathrm{DLP}\simeq_{\mathsf{FIN}}\mathrm{NLP}$.
\end{thm}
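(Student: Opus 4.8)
The plan is to prove both biconditionals at once by reducing to a single inequality. Since every normal program is a disjunctive program, $\mathrm{NLP}\leq_{\mathcal{C}}\mathrm{DLP}$ holds over any class $\mathcal{C}$, so $\mathrm{DLP}\simeq\mathrm{NLP}$ is equivalent to $\mathrm{DLP}\leq\mathrm{NLP}$ and $\mathrm{DLP}\simeq_{\mathsf{FIN}}\mathrm{NLP}$ is equivalent to $\mathrm{DLP}\leq_{\mathsf{FIN}}\mathrm{NLP}$. The forward direction is immediate: if $\varphi\equiv\psi$ over all structures then in particular $\varphi\equiv_{\mathsf{FIN}}\psi$, so $\mathrm{DLP}\leq\mathrm{NLP}$ gives $\mathrm{DLP}\leq_{\mathsf{FIN}}\mathrm{NLP}$. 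The whole content is in the converse, where I have two ingredients to glue: the Corollary of Section~3, namely $\mathrm{DLP}\simeq_{\mathsf{INF}}\mathrm{NLP}$, and the hypothesis $\mathrm{DLP}\leq_{\mathsf{FIN}}\mathrm{NLP}$.

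So fix $\Phi=\exists\tau\mathrm{SM}(\Pi)\in\mathrm{DLP}$. By the Corollary there is $\Phi_{\mathrm{inf}}=\exists\sigma_1\mathrm{SM}(\Pi_1)\in\mathrm{NLP}$ with $\Phi\equiv_{\mathsf{INF}}\Phi_{\mathrm{inf}}$, and by hypothesis there is $\Phi_{\mathrm{fin}}=\exists\sigma_2\mathrm{SM}(\Pi_2)\in\mathrm{NLP}$ with $\Phi\equiv_{\mathsf{FIN}}\Phi_{\mathrm{fin}}$; after renaming I assume the intensional and auxiliary vocabularies of $\Pi_1,\Pi_2$ and the gadgets below are pairwise disjoint. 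I would then build one normal program $\Pi^{\ast}$ carrying a branch selector realized by the even loop $\neg b'\rightarrow b$ and $\neg b\rightarrow b'$, which has exactly the two local stable choices $\{b\}$ and $\{b'\}$. I guard every rule of the \emph{infinite branch} (the program $\Pi_1$ together with an infiniteness gadget) by adding $b$ to its body, and every rule of the \emph{finite branch} (the program $\Pi_2$ together with a finiteness gadget) by adding $b'$ to its body; all constraints (rules with head $\bot$) are guarded in the same way. Intuitively, a stable model of $\Pi^{\ast}$ first commits to a branch and then must realize that branch.

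The two gadgets are what confine each branch to its intended class of structures. For the infinite branch I use the standard $\Sigma^1_1$ witness to infiniteness: auxiliary predicates encoding the graph of a total injective function together with a singleton marking one element outside its range, enforced by constraints; such a witness exists on a structure iff the structure is infinite, so the guarded $\Pi_1$ can be satisfied only over $\mathsf{INF}$. For the finite branch the key device is the inductive power of logic programs: I guess a strict linear order with least element $\min$, greatest element $\max$, and an immediate-successor relation $S$, and I define reachability by the positive rules $R(\min)$ and $R(x)\wedge S(x,y)\rightarrow R(y)$, whose minimal model (by Proposition~\ref{prop:fxp2sm}) is exactly $\{\min,S(\min),S(S(\min)),\dots\}$; a constraint $\neg R(z)\rightarrow\bot$ requires this set to cover the whole domain. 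The crucial point is that an all-reachable chain with a greatest element must be finite, because a strictly increasing $\omega$-chain has no maximum, so the guarded $\Pi_2$ is consistent over every finite structure but has \emph{no} stable model over any infinite one. Because the infinite branch lives only over $\mathsf{INF}$ and the finite branch only over $\mathsf{FIN}$, the two branches are mutually exclusive on every structure.

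To finish, I would apply the Splitting Lemma of~\cite{FLLP:09} (as already used in this paper) with the selector module at the bottom: the stable models of $\Pi^{\ast}$ decompose into those extending $\{b\}$ by a stable model of the guarded infinite branch and those extending $\{b'\}$ by a stable model of the guarded finite branch. Existentially closing all auxiliary constants then yields, via Propositions~\ref{prop:fo2prop} and~\ref{prop:fxp2sm}, that $\mathds{A}\models\exists\varsigma\mathrm{SM}(\Pi^{\ast})$ iff ($\mathds{A}$ is infinite and $\mathds{A}\models\Phi_{\mathrm{inf}}$) or ($\mathds{A}$ is finite and $\mathds{A}\models\Phi_{\mathrm{fin}}$), which by $\Phi\equiv_{\mathsf{INF}}\Phi_{\mathrm{inf}}$ and $\Phi\equiv_{\mathsf{FIN}}\Phi_{\mathrm{fin}}$ equals $\mathds{A}\models\Phi$ over all structures. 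Since $\Pi^{\ast}$ is finite and normal and uses only finitely many auxiliary constants, this places $\Phi$ in $\mathrm{NLP}$, establishing $\mathrm{DLP}\leq\mathrm{NLP}$. I expect the main obstacle to be the finiteness gadget: one must verify rigorously that the least-fixpoint reachability module forces finiteness, i.e. that it admits stable models precisely over $\mathsf{FIN}$, and then carry out the semantic bookkeeping showing that adding the positive guard literals $b$, $b'$ to rule bodies does not disturb minimality, so that the Splitting Lemma delivers exactly the claimed decomposition.
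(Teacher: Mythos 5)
Your proposal is correct and follows essentially the same route as the paper's proof: the theorem is reduced to showing $\mathrm{DLP}\leq\mathrm{NLP}$ by combining the infinite-structure translation of Section~3 with the normal program supplied by the hypothesis over finite structures, dispatching between the two inside a single normal program via finiteness/infiniteness gadgets, guard literals added to rule bodies, and the Splitting Lemma. The differences are only implementational: the paper derives two flags $\textsc{inf}$ and $\textsc{fin}$ from detector programs (a serial transitive irreflexive relation, and a shifted version of $\pi_1$ from~\cite{EGM:97}), whereas you use an exclusive two-way selector with constraint-based gadgets that make the wrong branch inconsistent rather than merely inert, which is a perfectly sound variant of the same idea.
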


\begin{proof} (Sketch)
It is trivial from left to right. Now we show the converse direction. Assume $\mathrm{DLP}\simeq_{\mathsf{FIN}}\mathrm{NLP}$. Then, for each disjunctive logic program $\Pi$, there should be a normal logic program $\Pi^{\star}$ such that $\mathrm{SM}(\Pi)\equiv_{\mathsf{FIN}}\exists\sigma\mathrm{SM}(\Pi^{\star})$, where $\sigma$ is the set of all predicates and functions occurring in $\Pi^{\star}$ but not in $\Pi$. To show $\mathrm{DLP}\simeq\mathrm{NLP}$, our idea is to design a logic program testing whether or not the model currently considered is finite. If that is true, we then let $\Pi^{\star}$ work; otherwise, let $\Pi^{\diamond}$ which is developed for infinite structures work. To do this, we introduce two proposition constants, $\textsc{inf}$ and $\textsc{fin}$, as flags.
%
Let $\Pi_{\mathrm{inf}}$ be the program consisting of following rules:\vspace{-.1cm}
\begin{center}
\begin{tabular}{rrclrrcl}
\!\!1.\!\!\!\!\!&$\neg\textsc{arc}(x,y)$\!&\!\!\!\!$\rightarrow$\!\!\!\!&\!$\underline{\textsc{arc}}(x,y)$\hfill,\quad&\mbox{ }\!\!4.\!\!\!\!\!&$\neg \textsc{ok}_a(x)$\!&\!\!\!\!$\rightarrow$\!\!\!\!&\!$\textsc{ok}_a(x)$\hfill,\\
\!\!2.\!\!\!\!\!&$\neg\underline{\textsc{arc}}(x,y)$\!&\!\!\!\!$\rightarrow$\!\!\!\!&\!$\textsc{arc}(x,y)$\hfill,\quad&\mbox{ }\!\!5.\!\!\!\!\!&$\textsc{arc}(x,x)$\!&\!\!\!\!$\rightarrow$\!\!\!\!&\!$\underline{\textsc{inf}}$\hfill,\\
\!\!3.\!\!\!\!\!&$\textsc{arc}(x,y)$\!&\!\!\!\!$\rightarrow$\!\!\!\!&\!$\textsc{ok}_a(x)$\hfill,\quad&\mbox{ }\!\!6.\!\!\!\!\!&$\neg\underline{\textsc{inf}}$\!&\!\!\!\!$\rightarrow$\!\!\!\!&\!$\textsc{inf}$\hfill,\vspace{-.1cm}
\end{tabular}
\end{center}
and rule $\textsc{arc}(x,y)\wedge\textsc{arc}(y,z)\rightarrow\textsc{arc}(x,z)$. This program sets flag $\textsc{inf}$ to be true if the intended model is infinite.
When the intended model is finite, we use program $\Pi_{\mathrm{fin}}$ to set flag $\textsc{fin}$, which is obtained from $\pi_1$ in Lemma 6.4 of~\cite{EGM:97} by substituting $\textsc{fin}$ for $Order$ and by applying the shift operation in Section 4.5 of~\cite{BD:94}. Let $\Pi^{\diamond}_0$ ($\Pi^{\star}_0$) be the program obtained from $\Pi^{\diamond}$ ($\Pi^{\star}$, respectively) by adding $\textsc{inf}$ ($\textsc{fin}$, respectively) to the body of each rule as a conjunct. Let $\Pi^{\dag}$ be the union of $\Pi^{\diamond}_0$, $\Pi^{\star}_0$, $\Pi_{\mathrm{inf}}$ and $\Pi_{\mathrm{fin}}$. We can show
$
\mathrm{SM}(\Pi)\equiv\exists\sigma\mathrm{SM}(\Pi^{\dag})
$, where $\sigma$ is the set of all constants occurring in $\Pi^{\dag}$ but not in $\Pi$.
\end{proof}

\begin{rem}
In classical logic, it is well-known that separating languages over arbitrary structures is usually easier than that over finite structures. This is not surprise as arbitrary structures enjoy a lot of properties, including the compactness and the interpolation theorem, that fail on finite structures~\cite{EF:99}. In logic programming, it also seems that arbitrary structures are better-behaved than finite structures. For example, there are some preservation theorems that work on arbitrary structures, but not on finite structures~\cite{AG:94,ZZ:13}.
Therefore, the above result sheds a new insight on the stronger separations of $\mathrm{DLP}$ from $\mathrm{NLP}$ over finite structures.
\end{rem}

From Theorem \ref{thm:arb_dlp2nlp} and Proposition \ref{prop:fin_dlp2nlp}, we immediately have:

\begin{cor}
$\mathrm{DLP}\simeq\mathrm{NLP}$ iff $\mathrm{NP}=\mathrm{coNP}$.
\end{cor}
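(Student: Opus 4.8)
The plan is to chain together the two biconditionals already established, exploiting the fact that Theorem~\ref{thm:arb_dlp2nlp} and Proposition~\ref{prop:fin_dlp2nlp} share exactly the same middle term $\mathrm{DLP}\simeq_{\mathsf{FIN}}\mathrm{NLP}$. Concretely, Theorem~\ref{thm:arb_dlp2nlp} asserts the equivalence $\mathrm{DLP}\simeq\mathrm{NLP}$ iff $\mathrm{DLP}\simeq_{\mathsf{FIN}}\mathrm{NLP}$, reducing the comparison over arbitrary structures to the comparison over finite structures; Proposition~\ref{prop:fin_dlp2nlp} in turn asserts $\mathrm{DLP}\simeq_{\mathsf{FIN}}\mathrm{NLP}$ iff $\mathrm{NP}=\mathrm{coNP}$. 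Since biconditionals compose transitively, the desired statement $\mathrm{DLP}\simeq\mathrm{NLP}$ iff $\mathrm{NP}=\mathrm{coNP}$ follows at once.

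In terms of the order of steps, I would first apply Theorem~\ref{thm:arb_dlp2nlp} to replace the arbitrary-structure condition $\mathrm{DLP}\simeq\mathrm{NLP}$ by the finite-structure condition $\mathrm{DLP}\simeq_{\mathsf{FIN}}\mathrm{NLP}$, and then apply Proposition~\ref{prop:fin_dlp2nlp} to replace that finite-structure condition by the complexity-theoretic condition $\mathrm{NP}=\mathrm{coNP}$. No additional constructions, translations, or case analyses are needed: all the substantive content has been discharged in the proofs of those two results (in particular the finiteness-testing program $\Pi^{\dag}$ of Theorem~\ref{thm:arb_dlp2nlp} and the Fagin/Stockmeyer/Leivant argument underlying Proposition~\ref{prop:fin_dlp2nlp}).

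There is no genuine obstacle here; the only point requiring a moment's care is purely bookkeeping, namely confirming that the intermediate term $\mathrm{DLP}\simeq_{\mathsf{FIN}}\mathrm{NLP}$ appearing in Theorem~\ref{thm:arb_dlp2nlp} is syntactically and semantically identical to the one appearing in Proposition~\ref{prop:fin_dlp2nlp}, so that the two equivalences may be glued without any hidden hypothesis on arities or vocabularies. As both statements quantify over all disjunctive and all normal programs (with unbounded arities, i.e.\ the full classes $\mathrm{DLP}$ and $\mathrm{NLP}$), this identification is immediate, and the corollary is obtained in one line.
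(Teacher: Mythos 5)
Your proposal is correct and is exactly the paper's argument: the corollary is stated immediately after Theorem~\ref{thm:arb_dlp2nlp} and Proposition~\ref{prop:fin_dlp2nlp}, obtained by chaining the two biconditionals through the shared condition $\mathrm{DLP}\simeq_{\mathsf{FIN}}\mathrm{NLP}$. Your additional bookkeeping remark about the intermediate term being the same in both statements is a fine (if unnecessary) sanity check.
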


\section{Related Works and Conclusion}

Over Herbrand structures, \cite{EG:97,Schl:95} showed that both disjunctive and normal logic programs define the same class of database queries if functions are allowed. Our result over infinite structures is more general and stronger than theirs as Herbrand structures are only a special class of countable infinite structures. It is not clear whether or not their approach, which employs the inductive definability from~\cite{Barw:76,Mosc:74}, can be generalized to arbitrary infinite structures.

To the best of our knowledge, the weaker separation over finite structures in this paper gives us the first lower bound for arities of auxiliary predicates in the translatability from disjunctive logic programs into normal logic programs. Improving the lower bound will shed light on deeply understanding the expressive power of disjunctive and normal logic programs, which will be a challenging task in the further study. The equivalence of the translatability over finite structures and over arbitrary structures provides us a new perspective to achieve this goal. We will pursue this in the near future.



%
%

\bibliographystyle{named}
\bibliography{ijcai13}

\end{document}